\newtheorem{theorem}{Theorem}[section]
\newtheorem{lemma}[theorem]{Lemma}
\newtheorem{definition}[theorem]{Definition}
\newtheorem{proposition}[theorem]{Proposition}
\newtheorem{corollary}[theorem]{Corollary} 
\newtheorem{assumption}[theorem]{Assumption} 
\theoremstyle{remark}
\newtheorem{remark}{Remark}
\newcommand{\oh}{\otimes_{\mathcal{H}}}
\newcommand{\HSH}{{\Cal{L}^2(\mathcal{H})}}
\newcommand{\OPH}{{\Cal{L}^\infty(\mathcal{H})}}
\newcommand{\OPRn}{{\Cal{L}^\infty(\mathbb{R}^n)}}
\newcommand{\Hk}{\mathcal{H}}
\newcommand{\Var}{\text{Var}}
\newcommand{\inner}[2]{\left\langle #1,#2\right\rangle}
\newcommand{\norm}[1]{\left\lVert#1\right\rVert}
\newcommand{\R}{\mathbb{R}}
\newcommand{\X}{\mathcal{X}}
\newcommand{\E}{\mathbb{E}}
\newcommand{\Pb}{\mathbb{P}}
\newcommand{\bb}{\mathbb}
\newcommand{\Cal}{\mathcal}
\newcommand{\HS}{\mathcal{L}^2}
\newcommand{\Tr}{\mathcal{L}^1}
\title{Gain with no Pain: Efficient Kernel-PCA by Nystr\"om Sampling}
\author[1]{Nicholas Sterge\thanks{nzs5368@psu.edu.edu}}
\author[1]{Bharath Sriperumbudur\thanks{bks18@psu.edu}}
\author[2]{Lorenzo Rosasco\thanks{lrosasco@mit.edu.edu}}
\author[3]{Alessandro Rudi\thanks{alessandro.rudi@inria.fr}}
\affil[1]{Department of Statistics, Pennsylvania State University}
\affil[2]{LCSL, Massachusetts Institute of Technology \& Istituto Italiano di Tecnologia \& DIBRIS, Universita' degli Studi di Genova}
\affil[3]{SIERRA Project-Team, INRIA and \'{E}cole-Normale Sup\'{e}rieure, PSL Research University, Paris, France}
\date{}
\begin{document}

\maketitle

\begin{abstract}
In this paper, we propose and study a Nystr\"om based approach to efficient large scale kernel principal component analysis  (PCA). The latter is a natural nonlinear extension of classical PCA based on considering a nonlinear feature map or the  corresponding kernel. Like other kernel approaches, kernel PCA enjoys good mathematical and statistical properties but, numerically, it scales poorly with the sample size. Our analysis shows that Nystr\"om sampling greatly improves computational efficiency without incurring any loss of statistical accuracy. While similar effects have been observed in supervised learning, this is the first such result for PCA. Our theoretical findings, which are also illustrated by numerical results, are based on a combination of analytic and concentration of measure techniques. Our study is more broadly motivated by the question of understanding the  interplay  between statistical and computational requirements for learning.
\end{abstract}

\section{Introduction}\label{Sec:introduction}
Achieving good statistical accuracy under budgeted computational resources is a central theme in modern machine learning \citep{boubot}. Indeed, the problem of understanding the interplay and trade-offs between statistical and computational requirements has recently received much attention. Nonparametric learning, and in particular kernel methods, have provided a natural framework to pursue these questions, see e.g.\citep{musco17, Rudi-15, alaoui14, bach,  calandriello, orabona}. On the one hand, these methods are developed in a sound mathematical setting and their statistical properties are well studied. On the other hand, from a numerical point of view, they scale poorly to large scale problems, and hence improved computational efficiency is of particular interest. 

While initial studies have mostly focused on approximating kernel matrices \citep{Drineas-05,Gittens-13,Jin-13}, recent results
have highlighted the importance of considering downstream learning tasks, if the interplay between statistics and computation is of interest. In particular, results in supervised learning have shown there are regimes where computational gains can be achieved with no loss of statistical accuracy \citep{Rudi-15, Rudi-17}. A basic intuition is that approximate computations provide a form of implicit regularization, hence memory and time requirements can be tailored to statistical accuracy allowed by the data \citep{Rudi-15}.  To which extent  similar effects occur beyond supervised learning is unclear. Indeed, the only result in this direction was recently shown for kernel k-means in \citep{calandriello}. 

In this paper, we consider one of the most basic unsupervised approaches, namely PCA, or rather its nonlinear version, that is kernel PCA \citep{Scholkopf-98}. We develop a computationally efficient approximate kernel PCA algorithm using the Nystr\"om method \citep{Williams-01} with $m$ sub-samples (NY-KPCA) and show its time complexity to be $O(nm^2+m^3)$ with a space complexity of $O(m^2)$, in contrast to $O(n^3)$ and $O(n^2)$ time and space complexities of KPCA, where $n$ is the sample size. %This can be contrasted to $O(n^3)$ and $O(n^2)$ time and space complexity of standard kernel PCA. 
Our main contribution is the analysis of NY-KPCA in terms of finite sample bounds on the reconstruction error of the corresponding $\ell$-dimensional eigenspace (see Theorem \ref{main theorem} and related Corollaries~\ref{poly decay corollary} and \ref{exp decay corollary}). In particular, we show that NY-KPCA can achieve  the same error of KPCA  with $m<n$, %as long as $\ell$ is not too large, 
thereby demonstrating computational gains can occur at no statistical loss. Moreover, we show that adaptive sampling using leverage scores \citep{alaoui14} can lead to further gains. More  precisely, we show that the requirement on $m$ varies between $(\log n)^2$ and $n^\theta\log n$ ($\theta<1$) depending on the size of $\ell$, the rate of decay of eigenvalues of the covariance operator and the type of subsampling. Finally, we also present some simple numerical results to corroborate our theoretical results.

We note that some recent papers, see \citep{Sriperumbudur_Sterge,streaming_kpca}, have considered the problem of deriving efficient kernel PCA approximations using random features \citep{Rahimi-08a}.  However,  the notion of reconstruction error considered in these works is different from that of KPCA \citep{Shawe-Taylor-05,Blanchard-07}. The reason for a different notion of reconstruction error is to handle certain technicalities that arise in random feature approximation. As a consequence, these results are not directly comparable to our current work and KPCA. In contrast, our results based on Nystr\"{o}m approximation are directly comparable to that of  KPCA, wherein we show that the proposed NY-KPCA has similar statistical behavior but better computational complexity than KPCA.

%(Corollaries \ref{poly decay corollary} and \ref{exp decay corollary}) it is essentially the same as standard KPCA wi
%behaves similar to that of kernel 
%as long as the Nystr\"om subsample size satisfies $m\gtrsim \ell^\alpha\log n$ with $\ell$ not being \emph{not too large}--refer to Corollaries \ref{poly decay corollary} and \ref{exp decay corollary} for details. Here $\alpha$ governs the smoothness of the associated RKHS. 

% This demonstrates that the Nystr\"{o}m approximation for kernel PCA provides computational gain without statistical loss.
% \textcolor{red}{May be some discussion to include on how these results can be compared to RFF in terms of flavor but not quite so because of different cost functions involved.} 

The paper is organized as follows. Relevant notations and definitions are collected in Section~\ref{Sec:notation}. Section~\ref{Sec:Prelim} provides preliminaries on KPCA %and EKPCA 
along with the list of assumptions that will be used throughout the paper. Approximate KPCA using Nystr\"{o}m method is presented in Section~\ref{ny-kpca subsection} and the main results of computational vs. statistical tradeoff for NY-KPCA are presented in Section~\ref{Sec:Results}. Missing proofs of the results are provided in the appendix.

\section{Definitions and Notation}\label{Sec:notation}
%\textcolor{red}{May be this section could be trimmed for a conference version. We can revisit this later.}
\indent For $\bm{a}:=(a_1,\ldots,a_d)\in\bb{R}^d$ and $\bm{b}:=(b_1,\ldots,b_d)\in\bb{R}^d$ define $\Vert
\bm{a}\Vert_2:=\sqrt{\sum^d_{i=1}a^2_i}$ and $\langle \bm{a},\bm{b}\rangle_2:=\sum^d_{i=1}a_ib_i$. 
$\bm{a}\otimes_2 \bm{b}:=\bm{a}\bm{b}^\top$ denotes the tensor product of $\bm{a}$ and $\bm{b}$. $\mathbf{I}_n$ denotes an $n\times n$ identity matrix. %$\delta_{ij}$ denotes the
%Kronecker delta. 
$a\wedge b:=\min(a,b)$ and $a\vee b:=\max(a,b)$. $[n]:=\{1,\ldots,n\}$ for $n\in\bb{N}$. For constants $a$ and $b$, $a\lesssim b$ (\emph{resp.} $a\gtrsim b$) denotes that there exists a positive constant $c$ (\emph{resp.} $c'$) such that $a\le cb$ (\emph{resp.} $a\ge c'b$). For a random variable $A$ with law $P$ and a constant $b$, $A\lesssim_p b$ denotes that for any $\delta>0$, there exists a positive constant $c_\delta<\infty$ such that $P(A\le c_\delta b)\ge \delta$.

For $x,y\in
H$, a Hilbert space, $x\otimes_{H} y$ is an element of the tensor product space
$H\otimes H$ which can also be seen as an operator from $H$ to $H$ as
$(x\otimes_{H} y)z=x\langle y,z\rangle_{H}$ for any $z\in H$. $\alpha\in\bb{R}$ is called an \emph{eigenvalue} of a bounded self-adjoint operator $S$ %$S\in\Cal{L}(H)$
if there exists an $x\ne 0$ such that $Sx=\alpha x$ and such an $x$ is called
the \emph{eigenvector}/\emph{eigenfunction} of $S$ and $\alpha$. An eigenvalue is said to be \emph{simple} if it has multiplicity one. For an operator $S:H\rightarrow H$, $\Vert S\Vert_{\Tr(H)}$,
$\Vert S\Vert_{\HS(H)}$ and $\Vert S\Vert_{\Cal{L}^\infty(H)}$ denote the trace, Hilbert-Schmidt and operator norms of $S$, respectively.

\section{Kernel PCA by Nystr\"om Sampling}\label{Sec:Prelim}

In this section, we review kernel principal component analysis  (KPCA) \citep{Scholkopf-98} in population and empirical settings and introduce approximate kernel PCA using Nystr\"om approximation.  We assume the following for the rest of the paper:

\begin{assumption}\label{first kern assum}
$\X$ is a separable topological space and $(\Hk,k)$ is a separable RKHS of real-valued functions $\X$ with a bounded, continuous, strictly positive definite kernel $k$ satisfying $\sup_{x\in\X}\,k(x,x)=:\kappa<\infty$.
\end{assumption}

\subsection{KPCA and Empirical KPCA}\label{Sec:kpca}

Let $X$ be a zero-mean random variable with law $\Pb$ defined on $\Cal{X}$. When $\Cal{X}=\bb{R}^d$, classical PCA \citep{Jollife-86} finds $\mathbf{a}\in\R^d$ such that $\text{Var}\left[\inner{\mathbf{a}}{X}_2\right]$ is maximized, with the constraint $\norm{\mathbf{a}}_2=1$.  Defining $C:=\bb{E}_{X\sim\Pb}[XX^\top]$, the solution is simply the unit eigenvector of $C$ corresponding to its largest eigenvalue. In practice, PCA is computed by replacing $C$ with an empirical approximation $C_n= \frac 1 n \sum_{i=1}^n X_i X_i^\top$ based on a sample $X_1, \dots, X_n$. Kernel PCA extends this idea to an RKHS, $\Hk$ defined on $\Cal{X}$, by finding $f\in\Hk$ with unit norm such that $\Var[f(X)]$ is maximized. Since $\Var[f(X)]=\langle f,Cf\rangle_{\Hk}$ assuming $\bb{E}[f(X)]=0$ for all $f\in\Hk$, we have $f^*=\arg\sup\{\langle f,Cf\rangle_\Hk:\norm{f}_\Hk=1\}$ where $C$ is the (uncentered) covariance operator on $\Hk$ defined as
\begin{equation}\label{C defin}
    C:=\int_\X k(\cdot,x)\oh k(\cdot,x)\,d\Pb(x).
\end{equation}
The boundedness of $k$ in Assumption~\ref{first kern assum} ensures that $C$ is trace class and thus compact.  Since $C$ is positive and self-adjoint, the spectral theorem \citep{Reed-80} gives
\begin{equation}\label{C spectral}
    C=\sum_{i\in I}\lambda_i\phi_i\oh\phi_i,
\end{equation}
where $(\lambda_i)_{i\in I}\subset\R^+$ are the eigenvalues and $(\phi_i)_{i\in I}$ are the orthonormal system of eigenfunctions that span $\overline{\Cal{R}(C)}$ with index set $I$ either being finite or countable, in which case $\lambda_i\rightarrow 0$ as $i\rightarrow\infty$.  The solution to the KPCA problem is thus the eigenfunction of $C$ corresponding to its largest eigenvalue. We make the following simplifying assumption for ease of presentation.
\begin{assumption}\label{C eigen assum}
The eigenvalues $(\lambda_i)_{i\in I}$ of $C$ are simple, positive, and w.l.o.g.~they satisfy a decreasing rearrangement, i.e., $\lambda_1>\lambda_2,\ldots$\vspace{-2mm}
\end{assumption}
\noindent Assumption \ref{C eigen assum} ensures that $(\phi_i)_{i\in I}$ form an orthonormal basis and the eigenspace corresponding to each $\lambda_i$ is one-dimensional.  This means the orthogonal projection operator onto the $\ell$-eigenspace of $C$, i.e. span$\{(\phi_i)_{i=1}^\ell\}$, is given by\vspace{-2mm}
\begin{equation}\label{kpca projector}
P^\ell(C)=\sum_{i=1}^\ell\phi_i\oh\phi_i.
\end{equation}
The above construction corresponds to population version of KPCA when the data distribution $\Pb$  is known.  If $\Pb$ is unknown and the knowledge of $\Pb$ is available only through the training set $\left(X_i\right)_{i=1}^n\stackrel{i.i.d.}{\sim}\Pb$, then KPCA cannot be carried out as $C$ depends on $\Pb$. Therefore, an approximation to $C$ is used to perform KPCA. Most commonly, this approximation is chosen to be the empirical estimator of $C$ defined as 
\begin{equation}\label{C_n defin}
    C_n=\frac{1}{n}\sum_{i=1}^nk(\cdot,X_i)\oh k(\cdot,X_i)
\end{equation}
resulting in empirical kernel
PCA (EKPCA).  Note that $C_n$ is a finite rank, positive, and self-adjoint operator. Thus the spectral theorem \citep{Reed-80} yields
\begin{equation}\label{C_n spectral}
    C_n=\sum_{i=1}^{n}\hat{\lambda}_i\hat{\phi}_i\oh\hat{\phi}_i,
\end{equation}
where $(\hat{\lambda}_i)_{i=1}^{n}\subset\R^+$ and $(\hat{\phi}_i)_{i=1}^{n}\subset\Hk$ are the eigenvalues and eigenfunctions of $C_n$.  Similar to Assumption~\ref{C eigen assum}, we assume the following:
\begin{assumption}\label{C_n eigen assum}
$\emph{rank}(C_n)=n$, the eigenvalues $(\hat{\lambda}_i)_{i=1}^{n}$ of $C_n$ are simple and w.l.o.g. they satisfy a decreasing rearrangement, i.e., $\hat{\lambda}_1\ge\hat{\lambda}_2\ge\ldots$.\vspace{-2mm}
\end{assumption}
%\textcolor{red}{write the form of eigenfunctions so that we can argue later that (7) is valid.}
The eigensystem $(\hat{\lambda}_i,\hat{\phi}_i)_{i=1}^n$ of $C_n$ can be obtained by solving an $n$-dimensional system involving the eigendecomposition of the Gram matrix $\mathbf{K}=[k(X_i,X_j)]_{i,j\in[n]},$ which scales as $O(n^3)$ \citep{Scholkopf-98}. 
In particular, the eigenvalues of $\mathbf{K}$ are related to those of $C_n$ as $\lambda_i(\mathbf{K})=n\hat{\lambda}_i$. Moreover, if $\mathbf{u}_i$ is an orthonormal eigenvector of $\mathbf{K}$
 corresponding to the eigenvalue  $\lambda_i(\mathbf{K})$, then it holds for all $x\in {\mathcal X}$,
 \begin{equation}\label{eq:rep}
 \phi_i(x)= \frac{1}{\sqrt{n\hat{\lambda}_i}}\sum_{j=1}^n k(x, x_j)u_{i,j}.
 \end{equation}
 The above result proven in 
\citep{Scholkopf-01}   can be seen as a representer theorem \citep{Kimeldorf-71} for KPCA. Finally,  note that,  for some $\ell\le n$, the orthogonal projection operator onto $\text{span}\{(\hat{\phi}_i)_{i=1}^\ell\}$ is given by
\begin{equation}\label{ekpca projector}
    P^\ell(C_n)=\sum_{i=1}^\ell\hat{\phi}_i\oh\hat{\phi}_i.
\end{equation}
\subsection{Approximate  Kernel PCA using Nystr\"om Method}\label{ny-kpca subsection}
For large sample sizes, since performing KPCA is computationally intensive, various approximation schemes that has been explored in the kernel machine literature can be deployed to speed up EKPCA. Recently, one such approximation involving random Fourier features has been studied by \citet{Sriperumbudur_Sterge} and \citet{streaming_kpca} to speed EKPCA while maintaining its statistical performance. In this paper, we explore the popular Nystr\"om approximation \citep{Williams-01,Drineas-05} to speed up EKPCA and study the trade-offs between  computational gains  and  statistical accuracy. 
The general idea in Nystr\"om method is to obtain a low-rank approximation to the Gram matrix $\mathbf{K}$, and replace $\mathbf{K}$ by this approximation in kernel algorithms, resulting in computational speedup. Since $\mathbf{K}$ is related to $C_n$ (as discussed in Section~\ref{Sec:kpca}), Nystr\"{o}m method can also be seen as obtaining a low rank approximation to $C_n$, which is what we exploit in obtaining a Nystr\"om approximate KPCA.  It follows from~\eqref{eq:rep} 
 that the eigenfunctions of $C_n$ lie in the space
$$\Hk_n=\left\{f\in\Hk\,\Big{|}\,f=\sum_{i=1}^n\alpha_ik(\cdot,X_i),\alpha_1,...,\alpha_n\in\R\right\}.$$
Therefore, it can be seen that EKPCA is a solution to the following problem
$$\arg\sup\left\{\inner{f}{C_nf}_\Hk:f\in\Hk_n,\,\norm{f}_\Hk=1\right\},$$
assuming $\mathbf{K}$ is invertible\footnote{The existence of $\mathbf{K}^{-1}$ is guaranteed by strict positive definiteness of $k$, provided all $X_i$ in the training set are unique.}. Extending this representation, we propose Nystr\"om KPCA (NY-KPCA) as a solution to the following problem:
\begin{equation}\label{NY-KPCA problem defin}
\arg\sup\left\{\inner{f}{C_nf}_\Hk:f\in\Hk_m,\,\norm{f}_\Hk=1\right\},
\end{equation}
where $$\Hk_m=\left\{f\in\Hk\,\Big|\,f=\sum_{i=1}^m\alpha_ik(\cdot,\tilde{X}_i),\alpha_1,...,\alpha_m\in\R\right\}$$
is a low-dimensional subspace of $\Hk_n$ and $\{\tilde{X}_1,...,\tilde{X}_m\}$ is a subset of the training set with $\tilde{X}_i$'s being distinct. Basically, we are considering a plain Nystr\"{o}m approximation where the points $\{\tilde{X}_1,\ldots,\tilde{X}_m\}$ are sampled uniformly without replacement from $\{X_1,\ldots,X_n\}$, however, other subsampling methods are possible, see Section \ref{lev scores subsection}. The following result, which is proved in the supplement (see Section~\ref{sec:ny-kpca soln}), shows that the solution to (\ref{NY-KPCA problem defin}) is obtained by solving a finite dimensional linear system, which has better computational complexity than that of EKPCA. To this end, we first introduce some notation,
$\mathbf{K}_{mm}=[k(\tilde{X_i},\tilde{X_j})]_{i,j\in[m]}\text{, }\mathbf{K}_{nm}=[k(X_i,\tilde{X_j})]_{i\in[n],j\in[m]}\in\R^{n\times m}, \mathbf{K}_{mn}=\mathbf{K}_{nm}^\top.$
%$$Z_n:\Hk\rightarrow\R^n,\,f\mapsto\left(f(X_1),\ldots,f(X_n)\right)^\top\,\,\text{and}\,\,\tilde{Z}_m:\Hk\rightarrow\R^m,\,f\mapsto\left(f(\tilde{X}_1),\ldots,f(\tilde{X}_n)\right)^\top.$$
\begin{proposition}\label{ny-kpca soln}
Define the $m\times m$ matrix $\mathbf{M}=\mathbf{K}_{mm}^{-1/2}\mathbf{K}_{mn}\mathbf{K}_{nm}\mathbf{K}_{mm}^{-1/2}$. The solution to (\ref{NY-KPCA problem defin}) is given by
$$\hat{\phi}_{1,m}=\tilde{Z}_m^*\mathbf{K}_{mm}^{-1/2}\mathbf{u}_{1,m},$$
where $\mathbf{u}_{1,m}$ is the eigenvector of $\frac{1}{n}\mathbf{M}$ corresponding to its largest eigenvalue and $\tilde{Z}_m^*:\R^m\rightarrow\Hk,\,\bm{\alpha}\mapsto\sum_{i=1}^m\alpha_ik(\cdot,\tilde{X}_i).$
\end{proposition}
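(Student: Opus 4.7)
The plan is to reduce the variational problem \eqref{NY-KPCA problem defin} to a finite-dimensional generalized eigenvalue problem by parametrising elements of $\Hk_m$, and then to symmetrise it via the square root of $\mathbf{K}_{mm}$. The operator $\tilde Z_m^*:\R^m\to\Hk$ provides exactly the parametrisation needed: every $f\in\Hk_m$ can be written uniquely as $f=\tilde Z_m^*\boldsymbol\alpha$ for some $\boldsymbol\alpha\in\R^m$ (uniqueness follows because $\mathbf{K}_{mm}$ is invertible, since $k$ is strictly positive definite and the $\tilde X_i$ are distinct).

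First I would compute the two quadratic forms that appear in the problem. The reproducing property gives
\begin{equation*}
\|f\|_\Hk^2 \;=\; \Big\langle \sum_{i=1}^m\alpha_i k(\cdot,\tilde X_i),\,\sum_{j=1}^m\alpha_j k(\cdot,\tilde X_j)\Big\rangle_\Hk \;=\; \boldsymbol\alpha^\top \mathbf{K}_{mm}\boldsymbol\alpha,
\end{equation*}
while for the objective, using \eqref{C_n defin} and the reproducing property once more,
\begin{equation*}
\langle f, C_n f\rangle_\Hk \;=\; \frac{1}{n}\sum_{i=1}^n \langle f, k(\cdot,X_i)\rangle_\Hk^2 \;=\; \frac{1}{n}\sum_{i=1}^n f(X_i)^2 \;=\; \frac{1}{n}\,\boldsymbol\alpha^\top \mathbf{K}_{mn}\mathbf{K}_{nm}\boldsymbol\alpha.
\end{equation*}
Thus \eqref{NY-KPCA problem defin} is equivalent to maximising $\boldsymbol\alpha^\top \mathbf{K}_{mn}\mathbf{K}_{nm}\boldsymbol\alpha$ over $\boldsymbol\alpha\in\R^m$ subject to $\boldsymbol\alpha^\top \mathbf{K}_{mm}\boldsymbol\alpha=1$, i.e.\ a generalised eigenvalue problem with pencil $(\mathbf{K}_{mn}\mathbf{K}_{nm},\mathbf{K}_{mm})$.

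Next I would symmetrise via the change of variable $\boldsymbol\beta := \mathbf{K}_{mm}^{1/2}\boldsymbol\alpha$ (valid because $\mathbf{K}_{mm}\succ 0$). The constraint becomes $\|\boldsymbol\beta\|_2=1$ and the objective becomes
\begin{equation*}
\frac{1}{n}\,\boldsymbol\beta^\top \mathbf{K}_{mm}^{-1/2}\mathbf{K}_{mn}\mathbf{K}_{nm}\mathbf{K}_{mm}^{-1/2}\boldsymbol\beta \;=\; \frac{1}{n}\,\boldsymbol\beta^\top \mathbf{M}\boldsymbol\beta.
\end{equation*}
By the Rayleigh--Ritz theorem the maximiser is $\boldsymbol\beta=\mathbf{u}_{1,m}$, the leading unit eigenvector of $\frac{1}{n}\mathbf{M}$. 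Inverting the substitution yields $\boldsymbol\alpha=\mathbf{K}_{mm}^{-1/2}\mathbf{u}_{1,m}$, and applying $\tilde Z_m^*$ gives the stated $\hat\phi_{1,m}=\tilde Z_m^*\mathbf{K}_{mm}^{-1/2}\mathbf{u}_{1,m}$.

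There is no real obstacle here — the argument is essentially a Rayleigh quotient calculation. The only points that need care are (i) the well-definedness of $\mathbf{K}_{mm}^{-1/2}$, which relies on Assumption~\ref{first kern assum} together with the distinctness of the $\tilde X_i$, and (ii) verifying that the resulting $\hat\phi_{1,m}$ is indeed in $\Hk_m$ and has unit $\Hk$-norm, both of which follow immediately from the identities above since $\|\hat\phi_{1,m}\|_\Hk^2=\boldsymbol\alpha^\top\mathbf{K}_{mm}\boldsymbol\alpha=\|\boldsymbol\beta\|_2^2=1$.
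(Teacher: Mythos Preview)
Your proof is correct and follows essentially the same route as the paper's: parametrise $\Hk_m$ via $\tilde Z_m^*$, reduce \eqref{NY-KPCA problem defin} to the generalised eigenvalue problem $\arg\sup\{\frac{1}{n}\boldsymbol\alpha^\top\mathbf{K}_{mn}\mathbf{K}_{nm}\boldsymbol\alpha:\boldsymbol\alpha^\top\mathbf{K}_{mm}\boldsymbol\alpha=1\}$, then symmetrise with $\mathbf{K}_{mm}^{1/2}$ and invoke Rayleigh--Ritz. The only cosmetic difference is that the paper expresses the quadratic forms through the sampling operators $Z_n,\tilde Z_m$ and the identities $Z_n\tilde Z_m^*=\mathbf{K}_{nm}$, $\tilde Z_m\tilde Z_m^*=\mathbf{K}_{mm}$, whereas you compute them directly from the reproducing property; the content is identical.
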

The cost of computing $\mathbf{M}$ is $O(nm^2+m^3)$ and the cost of computing its eigendecomposition is $O(m^3)$.  Thus, for $m<n$, the cost of NY-KPCA scales as $O(nm^2)$, faster than the $O(n^3)$ cost of EKPCA. Define 
\begin{equation}\label{nystrom approx gram}
    \tilde{\mathbf{K}}:=\mathbf{K}_{nm}\mathbf{K}_{mm}^{-1}\mathbf{K}_{mn},
\end{equation}
which is called the Nystr\"om approximation \citep{Williams-01,Drineas-05} to the Gram matrix $\mathbf{K}$.  It is easy to verify that $\mathbf{M}$ and $\tilde{\mathbf{K}}$ have same eigenvalues since $\mathbf{M}=\mathbf{K}_{mm}^{-1/2}\mathbf{K}_{mn}\left(\mathbf{K}_{mm}^{-1/2}\mathbf{K}_{mn}\right)^\top$ and $\tilde{\mathbf{K}}=\left(\mathbf{K}_{mm}^{-1/2}\mathbf{K}_{mn}\right)^\top\mathbf{K}_{mm}^{-1/2}\mathbf{K}_{mn}$, and rank$(\mathbf{M})=\text{rank}(\tilde{\mathbf{K}})$. Therefore we work with $\tilde{\mathbf{K}}$ and make the following assumption on its eigenvalues.
\begin{assumption}\label{nystrom eigen assum}
$\emph{rank}(\tilde{\mathbf{K}})=m$. The eigenvalues $(\hat{\lambda}_{i,m})_{i=1}^m$ of $\frac{1}{n}\tilde{\mathbf{K}}$ are simple and w.l.o.g. they satisfy a decreasing rearrangement, i.e., $\hat{\lambda}_{1,m}>\hat{\lambda}_{2,m}\ldots>\hat{\lambda}_{m,m}$.
\end{assumption}
The symmetry of $\mathbf{M}$ guarantees orthonormality of $(\mathbf{u}_{i,m})_i$, and the orthonormality of $(\hat{\phi}_{i,m})_i$ follows. For some $\ell\le m$, the orthogonal projector onto span$\{\hat{\phi}_{i,m}\}_{i=1}^\ell$ is given by
\begin{equation}\label{NY-KPCA projector}
P^\ell_m(C_n)=\sum_{i=1}^\ell\hat{\phi}_{i,m}\oh\hat{\phi}_{i,m}.
\end{equation}
One may ask if $\hat{\phi}_{i,m}$ are eigenfunctions of some operator on $\Hk$. Denote $P_m$ as the orthogonal projector onto $\Hk_m$. It is simple to verify \citep[Theorem 2]{Rudi-15} that $P_m=\tilde{Z}_m^*\mathbf{K}_{mm}^{-1}\tilde{Z}_m$ and that $\left(\hat{\lambda}_{i,m},\hat{\phi}_{i,m}\right)$ are the orthonormal eigenfunctions of $P_mC_nP_m$, i.e.,
\begin{equation}\label{P_mC_nP_m eigen}
    P_mC_nP_m\hat{\phi}_{i,m}=\hat{\lambda}_{i,m}\hat{\phi}_{i,m}\text{   for all   }i\in[m].
\end{equation}
Therefore, we may think of $P_mC_nP_m$ as a low-rank approximation to $C_n$.

\subsubsection{Approximate Leverage Scores}\label{lev scores subsection}
In the above discussion on Nystr\"om KPCA, $\tilde{\mathbf{X}}:=\{\tilde{X}_1,\ldots,\tilde{X}_m\}$ is a subset of the training set $\mathbf{X}:=\{X_1,\ldots,X_n\}$ with the entries of $\tilde{\mathbf{X}}$ being sampled uniformly without repetition from $\mathbf{X}$. As an alternative to uniform sampling, $\tilde{\mathbf{X}}$ can be sampled according to the leverage score distribution \citep{Alaoui-15, Drineas-12, Cohen-15}.  For any $s>0$, the leverage scores associated with the training data $\mathbf{X}$ are defined as
$$(l_i(s))_{i=1}^n\,,\quad l_i(s)=[\mathbf{K}(\mathbf{K}+ns\mathbf{I}_n)^{-1}]_{ii}\,,i\in[n]$$
with the leverage score distribution being $p_i(s)=\frac{l_i(s)}{\sum^n_{i=1}l_i(s)}$ according to which $\mathbf{X}$ can be sampled independently with replacement to achieve $\tilde{\mathbf{X}}$. Since the leverage scores are computationally intensive to compute, usually, they are approximated and one such approximation is $T$-approximate leverage scores.
\begin{definition}\label{def:lev scores}($T$-approximate leverage scores) For a given $s>0$, let $(l_i(s))_{i=1}^n$ be the leverage scores associated with the training data $\{X_1,...,X_n\}$.  Let $\delta>0$, $s_0>0$, and $T\ge1$.  $(\hat{l}_i(s))_{i=1}^n$ are $T$-approximate leverage scores, with confidence $\delta$, if the following holds with probability at least $1-\delta$:
$$\frac{1}{T}l_i(s)\le\hat{l}_i(s)\le\,T l_i(s),\quad\forall i\in[n],\quad s>s_0.$$
\end{definition}
Given $T$-approximate leverage scores for $s>s_0$, $\tilde{\mathbf{X}}$ can be obtained by sampling $\mathbf{X}$ with replacement according to the sampling distribution $\hat{p}_i(s)=\hat{l}_i(s)/\sum_{i=1}^n\hat{l}_i(s)$. Having obtained $\tilde{\mathbf{X}}$, (\ref{NY-KPCA problem defin}) can be solved exactly as in Proposition \ref{ny-kpca soln}. We refer to this method as approximate leverage score (ALS) Nystr\"om subsampling.
\section{Computational vs.~Statistical Trade-Off: Main Results}\label{Sec:Results}

As shown in the earlier section, Nystr\"om kernel PCA approximates the solution to empirical kernel PCA with less computational expense. In this section, we explore whether this computational saving is obtained at the expense of statistical performance. As in \citet{Sriperumbudur_Sterge}, we measure the statistical performance of KPCA, EKPCA, and NY-KPCA in terms of reconstruction error. In linear PCA, the reconstruction error, given by \begin{equation}\label{rec}
    \bb{E}_{X\sim\Pb}\norm{\left(I-P^\ell(C)\right)X}_2^2,\end{equation}
is the error involved in reconstructing a random variable $X$ by projecting it onto the $\ell$-eigenspace (i.e., span of the top-$\ell$ eigenvectors) associated with its covariance matrix, $C=\bb{E}[XX^\top]$ through the orthogonal projection operator $P^\ell(C)$.
%captures the ability to reconstruct a random variable, $X\in\R^d$, by projecting it onto the eigenspace associated with %the eigenvectors of 
%its covariance matrix, $\Sigma$.  Given by
%$$\bb{E}_{X\sim\Pb}\norm{\left(I-P^\ell(\Sigma)\right)X}_2^2,$$
%where $P^\ell(\Sigma)$ is the orthogonal projector onto the top-$\ell$ eigenvectors of $\Sigma$.  
Clearly, the error is zero when $\ell=d$. The analog of the  reconstruction error in KPCA, as well as EKPCA and NY-KPCA, can be similarly stated in terms of their projection operators, (\ref{kpca projector}), (\ref{ekpca projector}), and (\ref{NY-KPCA projector}) as follows. For any orthogonal projection operator $P:{\mathcal H}\to {\mathcal H}$, define the reconstruction error as 
$$
R(P):=\bb{E}_{X\sim\Pb}\norm{\left(I-P\right)k(\cdot,X)}_\Hk^2.$$
For the linear kernel this exactly the reconstruction error of PCA. In the following,  we often make use of the following identity
\begin{equation}
R(P)=\Vert (I-P)C^{1/2}\Vert^2_{\HSH},\label{Eq:equiv}\end{equation}
 for which we report a proof in the supplement (see Section \ref{sec:rewrite recons err}). Based on this definition, the reconstruction error in KPCA, EKPCA and NY-KPCA are given by
\begin{equation}R_{C,\ell}:=R(P^\ell(C)),\,\,R_{C_n,\ell}:=R(P^\ell(C_n)),\,\,\text{and}\,\,R^{nys}_{C_n,\ell}:=R(P^\ell_m(C_n))\label{Eq:errors}\end{equation}
respectively.
%\textcolor{red}{We want to rewrite these more generally, so we only need to state the reconstruction error once.  We could move Prop 4.1 to appendix as well.} 
\begin{comment}
\begin{eqnarray}
    R_{C,\ell}&{}={}&\bb{E}_{X\sim\Pb}\norm{\left(I-P^\ell(C)\right)k(\cdot,X)}_\Hk^2,\qquad\text{   (KPCA)}\label{kpca recons defin}\\
    R_{C_n,\ell}&{}={}&\bb{E}_{X\sim\Pb}\norm{\left(I-P^\ell(C_n)\right)k(\cdot,X)}_\Hk^2,\qquad\text{(EKPCA)}\label{ekpca recons defin}\\
    R^{nys}_{C_n,\ell}&{}={}&\bb{E}_{X\sim\Pb}\norm{(I-P_m^\ell(C_n))k(\cdot,X)}_\Hk^2.\qquad\text{(NY-KPCA)}\label{ny-kpca recons defin}
\end{eqnarray}
The following result gives alternative representations of the reconstruction errors, which are useful to analyze the above quantities and obtain finite-sample bounds. The so called effective dimension \citep{Caponnetto-07}, 
$$
{\mathcal N}_C(t)= \text{Tr}((C+t I)^{-1}C) 
$$
is the key quantity governing our bounds.

%on the reconstruction errors.

\begin{proposition}\label{rewrite recons err}
The following hold:
\vspace{1mm}\\ 
(i) $R_{C,\ell}=\norm{(I-P^\ell(C))C^{1/2}}_\HSH^2$
\\
(ii) $R_{C_n,\ell}=\norm{(I-P^\ell(C_n))C^{1/2}}_\HSH^2$
\\
(iii) $R_{C_n,\ell}^{nys}=\norm{(I-P_m^\ell(C_n))C^{1/2}}_\HSH^2$
\end{proposition}
\end{comment}
The following theorem, proved in the supplement (see Section \ref{main theorem proof}), provides finite-sample bounds on the reconstruction error associated with NY-KPCA, under both uniform and approximate leverage score subsampling, from which convergence rates may be obtained.
\begin{theorem}\label{main theorem}
Suppose Assumptions \ref{first kern assum}-\ref{nystrom eigen assum} hold. For any $t>0$, define $\Cal{N}_C(t)=\emph{tr}((C+tI)^{-1}C)$ and $\Cal{N}_{C,\infty}(t)=\sup_{x\in\Cal{X}}\langle k(\cdot,x),(C+tI)^{-1}k(\cdot,x)\rangle_\Cal{H}$. %Define $\Cal{T}(\ell)=\sum_{i>\ell}\lambda_i$.  
Then the following hold:\vspace{2mm}\\
(i)  Suppose $n>3$, $0<\delta<1$, $\frac{9\kappa}{n}\log\frac{n}{\delta}\le t\le \lambda_1$, and $m\ge\left(67\vee5\Cal{N}_{C,\infty}(t)\right)\log\frac{4\kappa}{t\delta}$. Then, for plain Nystr\"om subsampling:
\begin{equation}\Pb^n\left\{(X_i)^n_{i=1}:R_{C_n,\ell}^{nys}\le \Cal{N}_C(t)\left(6\lambda_\ell+42t\right)\right\}\ge 1-2\delta.\label{Eq:nykpca-1}\end{equation}
(ii) For $0<\delta<1$, suppose there exists $T\ge1$ such that  $(\hat{l}_i(s))_{i=1}^n$ are $T-$approximate leverage scores with confidence $\delta$ for any $t\ge\frac{19\kappa}{n}\log\frac{2n}{\delta}$. Assume approximate leverage score Nystr\"om subsampling is used with 
$$t=\min\left\{\frac{19\kappa}{n}\log\frac{2n}{\delta}\le t\le\lambda_1\,\Big{|}\,78T^2\Cal{N}_C(t)\log\frac{8n}{\delta}\le m\right\}.$$  
If $n\ge1655\kappa+223\kappa\log\frac{2\kappa}{\delta}$ and $m\ge334\log\frac{8n}{\delta}$, then
\begin{equation}\Pb^n\left\{(X_i)^n_{i=1}:R_{C_n,\ell}^{nys}\le \Cal{N}_C(t)\left(6\lambda_\ell+42t\right)\right\}\ge 1-3\delta.\label{Eq:nykpca-2}\end{equation}
\end{theorem}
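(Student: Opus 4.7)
The plan is to begin from the identity $R_{C_n,\ell}^{nys} = \|(I - P_m^\ell(C_n))C^{1/2}\|_{\HSH}^2$ given in \eqref{Eq:equiv} and extract the effective dimension via the Hilbert--Schmidt times operator inequality
\begin{equation*}
R_{C_n,\ell}^{nys} \le \|(I - P_m^\ell(C_n))(C + tI)^{1/2}\|_{\op(\Hk)}^2 \cdot \|(C+tI)^{-1/2}C^{1/2}\|_{\HSH}^2.
\end{equation*}
The second factor equals $\mathcal{N}_C(t)$ by definition, so the whole proof reduces to showing that, with the stated probability, $\|(I - P_m^\ell(C_n))(C+tI)^{1/2}\|_{\op(\Hk)}^2 \le 6\lambda_\ell + 42 t$.

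Writing $\hat P := P_m^\ell(C_n)$ and $S := P_m C_n P_m$, the structural fact that makes the analysis go through is that by \eqref{P_mC_nP_m eigen}, $\hat P$ is the top-$\ell$ spectral projector of $S$; this suggests the further factorisation
\begin{equation*}
\|(I - \hat P)(C+tI)^{1/2}\|_{\op(\Hk)} \le \|(I - \hat P)(S+tI)^{1/2}\|_{\op(\Hk)} \cdot \|(S+tI)^{-1/2}(C+tI)^{1/2}\|_{\op(\Hk)}.
\end{equation*}
The first factor squared equals $\hat\lambda_{\ell+1,m}+t$, since $S=0$ on $\Hk_m^\perp$ and $S\preceq \hat\lambda_{\ell+1,m}I$ on the orthogonal complement of the range of $\hat P$ inside $\Hk_m$. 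Using the cyclic-eigenvalue inequality $\lambda_{\ell+1}(P_m C_n P_m) \le \lambda_{\ell+1}(C_n)$ (which follows from $A^{1/2} P_m A^{1/2}\preceq A$) and Weyl's inequality, this factor is bounded by $\lambda_{\ell+1}+\|C-C_n\|_{\op(\Hk)}+t \le \lambda_\ell + O(t)$, where the empirical error is absorbed into the $t$-budget via a standard operator Bernstein inequality under $t\ge 9\kappa n^{-1}\log(n/\delta)$.

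The technical heart of the argument is controlling the second factor $\|(S+tI)^{-1/2}(C+tI)^{1/2}\|_{\op(\Hk)}$ by a dimensionless constant, which requires confronting the noncommutativity of $S$ and $C$. Since $S+tI$ acts as $tI$ on $\Hk_m^\perp$ and has eigenvalues $\ge t$ on $\Hk_m$, it is invertible on all of $\Hk$, so the plan is to show the equivalent positivity statement $S + tI \succeq (1-\alpha)(C+tI)$ for some $\alpha<1$, which follows from $\|(C+tI)^{-1/2}(C-S)(C+tI)^{-1/2}\|_{\op(\Hk)} \le \alpha$. Decomposing $C - S = (C - P_m C P_m) + P_m (C - C_n) P_m$, the first summand is a Nystr\"om residual controlled by the Rudi-type bound $\|(I - P_m)(C+tI)^{1/2}\|_{\op(\Hk)}^2 \lesssim t$, which holds under the uniform-sampling condition $m\ge (67\vee 5\mathcal{N}_{C,\infty}(t))\log(4\kappa/(t\delta))$; the second summand is handled by the regularised operator Bernstein estimate on $\|(C+tI)^{-1/2}(C-C_n)(C+tI)^{-1/2}\|_{\op(\Hk)}$. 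From $\alpha<1$ one immediately gets $(S+tI)^{-1}\preceq (1-\alpha)^{-1}(C+tI)^{-1}$, and hence $\|(S+tI)^{-1/2}(C+tI)^{1/2}\|_{\op(\Hk)}^2 \le (1-\alpha)^{-1}$. The main obstacle is propagating the explicit numerical constants through these two perturbation arguments tightly enough that $\alpha$ stays strictly below $1$ and the final bound has the concrete form $6\lambda_\ell + 42t$ rather than an implicit one; this is the step that forces the specific constants appearing in the hypotheses on $n$, $m$ and $t$.

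For part (ii) the structure of the proof is unchanged; only the invocation of the Nystr\"om residual bound is replaced by its approximate-leverage-score counterpart, which requires only $m\gtrsim T^2 \mathcal{N}_C(t)\log(n/\delta)$ subsamples rather than scaling with $\mathcal{N}_{C,\infty}(t)$. This sharper requirement costs one extra factor of $\delta$ in the probability budget, to cover the random event that the $T$-approximate leverage scores satisfy Definition~\ref{def:lev scores}, which is why the probability in \eqref{Eq:nykpca-2} is $1-3\delta$ rather than $1-2\delta$; the bound itself is unchanged because $C$, $C_n$ and the projector $P_m$ are the only operators entering the rest of the argument.
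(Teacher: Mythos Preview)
Your overall route is genuinely different from the paper's, and the step you flag as the ``technical heart'' has a real gap.

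The paper does \emph{not} factor through $(S+tI)^{1/2}$. It inserts $(C_n+tI)^{1/2}(C_n+tI)^{-1/2}$ (not $(C+tI)$), picking up one harmless factor $\|(C_n+tI)^{-1/2}(C+tI)^{1/2}\|^2\le 2$ in front of $\mathcal{N}_C(t)$, and then bounds $\|(I-\hat P)(C_n+tI)^{1/2}\|^2$ by the \emph{additive} split $I-\hat P=(I-P_m)+(P_m-\hat P)$. The second piece is handled directly: since $P_m-\hat P=(I-\hat P)P_m$, one gets
\[
\|(P_m-\hat P)(C_n+tI)^{1/2}\|^2=\|(I-\hat P)P_m(C_n+tI)P_m(I-\hat P)\|\le \hat\lambda_{\ell+1,m}+t,
\]
and the first piece is the Nystr\"om residual $\|(I-P_m)(C_n+tI)^{1/2}\|^2$, reduced to $\|(I-P_m)(C+tI)^{1/2}\|^2\le 3t$ after one change of measure. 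No inverse $(S+tI)^{-1/2}$ ever enters, and the constants $6$ and $42$ drop out mechanically.

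Your multiplicative route requires $\|(C+tI)^{-1/2}(C-S)(C+tI)^{-1/2}\|\le\alpha<1$, and the decomposition $C-S=(C-P_mCP_m)+P_m(C-C_n)P_m$ with a triangle inequality cannot deliver this. For the first summand alone, take $h\in\Hk_m^\perp$: then $\langle h,(C-P_mCP_m)h\rangle=\langle h,Ch\rangle$, and the Nystr\"om bound only gives $\langle h,Ch\rangle\le 3t\|h\|^2$, forcing the ratio $\langle h,Ch\rangle/\langle h,(C+tI)h\rangle$ as large as $3/4$. For the second summand, you need $\|(C+tI)^{-1/2}P_m(C+tI)^{1/2}\|$ to be close to $1$ to inherit the $1/2$ from Bernstein, but $P_m$ does not commute with $C$ and this factor is only bounded by $1+\sqrt{3}$ under the Nystr\"om event. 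The two contributions do not sum to anything below $1$. (A direct Rayleigh-quotient argument on $\sup_v \langle v,(C+tI)v\rangle/\langle v,(S+tI)v\rangle$, handling the $\Hk_m/\Hk_m^\perp$ cross terms by Young's inequality, \emph{can} produce a finite constant, but that is a different argument and will not land on $6\lambda_\ell+42t$.)

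There is also a smaller slip in your eigenvalue step: $\|C-C_n\|_{\op(\Hk)}$ is \emph{not} $O(t)$ under $t\ge 9\kappa n^{-1}\log(n/\delta)$; unnormalised Bernstein gives a $\sqrt{\lambda_1 t}$ term. What that hypothesis buys is the normalised bound $\|(C+tI)^{-1/2}(C-C_n)(C+tI)^{-1/2}\|\le 1/2$, which yields $C_n+tI\preceq\tfrac{3}{2}(C+tI)$ and hence $\hat\lambda_{\ell+1}\le\tfrac{3}{2}\lambda_{\ell+1}+t/2$ directly, without Weyl. Incidentally, your observation $\lambda_{\ell+1}(P_mC_nP_m)\le\lambda_{\ell+1}(C_n)$ via $C_n^{1/2}P_mC_n^{1/2}\preceq C_n$ is correct and is actually cleaner than the paper's detour through Hoffman--Wielandt and $\|\tilde{\mathbf K}-\mathbf K\|$.
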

To understand the significance of Theorem~\ref{main theorem}, we have to compare it to the behavior of the reconstruction error associated with EKPCA, i.e., $R_{C_n,\ell}$. \citep[Theorem 3.1]{Rudi-15} showed that for $n>3$, $0<\delta<1$ and $\frac{9\kappa}{n}\log\frac{n}{\delta}\le t\le \lambda_1$, \begin{equation}\Pb^n\left\{(X_i)^n_{i=1}:R_{C_n,\ell}\le 9\Cal{N}_C(t)\left(\lambda_\ell+t\right)\right\}\ge 1-\delta.\label{Eq:ekpca}\end{equation}
Comparing \eqref{Eq:nykpca-1} and \eqref{Eq:nykpca-2} to \eqref{Eq:ekpca}, it is clear that NY-KPCA has a statistical behavior similar to that EKPCA. However, it is not obvious whether such a behavior is achieved for $m<n$, i.e., the order of dependence of $m$ on $n$ is not clear. To clarify this, in the following, we present two corollaries (proved in the supplement, see Sections~\ref{poly decay proof} and \ref{exp decay proof}) to Theorem~\ref{main theorem}, which compare the asymptotic convergence rates of $R_{C,\ell}$, $R_{C_n,\ell}$ and $R^{nys}_{C_n,\ell}$ under an additional assumption on the decay rate of eigenvalues of $C$. 
\begin{corollary}[Polynomial decay of eigenvalues]\label{poly decay corollary}
Suppose $\underbar{A}i^{-\alpha}\le\lambda_i\le\bar{A}i^{-\alpha}$ for $\alpha>1$ and $\underbar{A},\bar{A}\in(0,\infty)$. Let $\ell=n^{\frac{\theta}{\alpha}}$, $\theta>0$.
Then the following hold:
\\\\
$(i)$ $$n^{-\theta(1-\frac{1}{\alpha})}\lesssim R_{C,\ell}\lesssim n^{-\theta(1-\frac{1}{\alpha})};$$
$(ii)$
\[ R_{C_n,\ell}\lesssim_{\Pb^n}\begin{cases} 
      n^{-\theta(1-\frac{1}{\alpha})},\qquad\hspace{2.5mm}\theta<1 \\
      \left(\frac{\log n}{n}\right)^{1-\frac{1}{\alpha}},\qquad\theta\ge 1
   \end{cases};
\]
%\textcolor{blue}{
$(iii)$ For plain Nystr\"om subsampling: \[ R^{nys}_{C_n,\ell}\lesssim_{\Pb^n}\begin{cases} 
      n^{-\theta(1-\frac{1}{\alpha})},\qquad\hspace{2.5mm}\theta<1,\,m\gtrsim n^\theta\log n \\
      \left(\frac{\log n}{n}\right)^{1-\frac{1}{\alpha}},\qquad\theta\ge 1,\,m\gtrsim \frac{n}{\log n}\log\frac{n}{\log n}
   \end{cases};
\]
$(iv)$ For approximate leverage score Nystr\"om subsampling: \[ R^{nys}_{C_n,\ell}\lesssim_{\Pb^n}\begin{cases} 
      n^{-\theta(1-\frac{1}{\alpha})},\qquad\hspace{2.5mm}\theta<1,\,m\gtrsim n^{\frac{\theta}{\alpha}}\log n \\
      \left(\frac{\log n}{n}\right)^{1-\frac{1}{\alpha}},\qquad\theta\ge 1,\,m\gtrsim n^{\frac{1}{\alpha}}(\log n)^{1-\frac{1}{\alpha}}
   \end{cases}.
\]
%}
%\[ R_{C_n,\ell}^{nys}\lesssim_{\Pb^n}\begin{cases} 
%      n^{\frac{\beta}{\alpha}-\theta},\qquad\beta\ge\theta \\
%      n^{\frac{\beta}{\alpha}-\beta},\qquad\beta<\theta
%   \end{cases}
%\]
%where $0<\beta<\min(\gamma,1)$.
\end{corollary}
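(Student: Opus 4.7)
The plan is to plug the polynomial decay hypothesis $\underbar{A}i^{-\alpha}\le\lambda_i\le\bar{A}i^{-\alpha}$ into the abstract high probability bounds already available, namely the identity $R_{C,\ell}=\sum_{i>\ell}\lambda_i$ for part (i), the EKPCA bound \eqref{Eq:ekpca} for part (ii), and the Nystr\"om bounds \eqref{Eq:nykpca-1}--\eqref{Eq:nykpca-2} for parts (iii)--(iv), and then to optimize the free parameter $t$ subject to the admissibility constraints stated in Theorem~\ref{main theorem}.

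I begin by recording two elementary spectral estimates that drive everything. First, integral comparison gives $\sum_{i>\ell}\lambda_i\asymp\ell^{1-\alpha}$, so substituting $\ell=n^{\theta/\alpha}$ immediately yields part (i). Second, splitting the sum $\Cal{N}_C(t)=\sum_i\lambda_i/(\lambda_i+t)$ at the crossover index $i^\star\asymp t^{-1/\alpha}$ and bounding each piece by an integral yields the standard effective-dimension asymptotic
\[
\Cal{N}_C(t)\asymp t^{-1/\alpha}\qquad\text{as }t\to 0^+.
\]
Combined with $\lambda_\ell\asymp n^{-\theta}$, these are the only ingredients needed for the error side of the bounds.

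For part (ii), apply \eqref{Eq:ekpca} and choose $t$ to balance $\lambda_\ell$ against $t$, while respecting $t\ge \tfrac{9\kappa}{n}\log\tfrac{n}{\delta}$. When $\theta<1$, take $t\asymp n^{-\theta}$, which dominates the lower constraint, and obtain $\Cal{N}_C(t)(\lambda_\ell+t)\asymp n^{\theta/\alpha}\cdot n^{-\theta}=n^{-\theta(1-1/\alpha)}$. When $\theta\ge 1$, the constraint forces $t\asymp n^{-1}\log n$; this $t$ dominates $\lambda_\ell=n^{-\theta}$ and produces $(n/\log n)^{1/\alpha}\cdot (\log n/n)=(\log n/n)^{1-1/\alpha}$. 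Parts (iii) and (iv) inherit the same choice of $t$ and the same error expression because the reconstruction bounds in \eqref{Eq:nykpca-1} and \eqref{Eq:nykpca-2} have the same form as \eqref{Eq:ekpca} (up to constants). What changes is the sample-size requirement on $m$: for plain Nystr\"om one uses the deterministic bound $\Cal{N}_{C,\infty}(t)\le\kappa/t$ to convert the condition $m\gtrsim\Cal{N}_{C,\infty}(t)\log(1/t\delta)$ into $m\gtrsim t^{-1}\log n$, while for ALS the tighter condition $m\gtrsim T^2\Cal{N}_C(t)\log n\asymp t^{-1/\alpha}\log n$ applies. Substituting $t\asymp n^{-\theta}$ for $\theta<1$ and $t\asymp n^{-1}\log n$ for $\theta\ge 1$ into these two requirements produces the four regimes displayed in (iii) and (iv).

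The proof is essentially bookkeeping, but two technical points deserve care. First, in each regime I must verify that the chosen $t$ is simultaneously admissible as an argument of \eqref{Eq:ekpca} or Theorem~\ref{main theorem} (i.e.\ lies in the interval $[\tfrac{c\kappa}{n}\log(n/\delta),\lambda_1]$) and, in the ALS case, lies above the threshold $s_0$ at which the $T$-approximate leverage scores are valid. Second, the derivation of the $m$ lower bounds for plain Nystr\"om uses the universal bound $\Cal{N}_{C,\infty}(t)\le\kappa/t$ that follows from $k(x,x)\le\kappa$, and it is precisely the gap between this universal bound and the spectral estimate $\Cal{N}_C(t)\asymp t^{-1/\alpha}$ that accounts for the improvement of ALS over plain sampling; the main obstacle is therefore not any difficult estimate but the careful case analysis needed to translate the chosen $t$ into the four separate regimes $\theta<1$ vs.\ $\theta\ge 1$ across the two sampling schemes.
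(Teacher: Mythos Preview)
Your proposal is correct and follows essentially the same route as the paper: integral comparison for (i), the bound $\Cal{N}_C(t)\asymp t^{-1/\alpha}$ (Proposition~\ref{N(t) bnd}) combined with $\lambda_\ell\asymp n^{-\theta}$ to evaluate $\Cal{N}_C(t)(\lambda_\ell+t)$, the choice $t\asymp n^{-\theta}$ for $\theta<1$ and $t\asymp n^{-1}\log n$ for $\theta\ge1$, and finally the two sample-size conditions $m\gtrsim t^{-1}\log(1/t)$ (via $\Cal{N}_{C,\infty}(t)\le\kappa/t$) and $m\gtrsim t^{-1/\alpha}\log n$ for plain and ALS subsampling respectively. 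The only cosmetic difference is that for (ii) the paper simply invokes \citep[Theorem~3.2]{Rudi-15} rather than re-deriving it from \eqref{Eq:ekpca} as you do.
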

%\textcolor{red}{It looks odd in (ii) how we gain the $\log n$ as soon as $\theta$ crosses 1.  If you do not like this, then I have another idea for how to present to corollary.  I also think it may read better if we derive the bound in (ii) ourselves.  I think I could write it in a way that would set up (iii).  Let me know if you want me to try these changes, or maybe we can meet to discuss what I am thinking.}
%\\
%\emph{Remark 1}: 
\begin{remark}\label{rem1}
$(i)$ The above result shows that the reconstruction errors associated with KPCA and EKPCA have similar asymptotic behavior as long as $\ell$ does not grow to infinity too fast, i.e., $\theta<1$. On the other hand, for $\theta\ge 1$, the reconstruction error of EKPCA has slower asymptotic convergence to zero than that of KPCA. If $\ell$ grows to infinity faster with the rate controlled by $\theta$, then the variance term dominates the bias resulting in a slower convergence rate compared to that of KPCA. \vspace{1mm}
%How much slower the rate of convergence to zero is will be determined by the size of $\theta$.  The larger $\theta$, the faster the convergence rate associated with KPCA.  Of course $\ell$ may not grow faster than $m$ or $n$; therefore, $\ell=n^{\frac{\theta}{\alpha}}<\min(m,n)=\min(n^\gamma,n)$, thus $\theta<\alpha\min(1,\gamma)$ is required.
\\
%\indent \emph{Remark 2}:  
$(ii)$ Comparing $(ii)$ and $(iii)$ in the above result, we note that EKPCA and NY-KPCA have similar convergence behavior as long as $m$ is large enough where the size of $m$ is controlled by the growth of $\ell$ through $\theta$. For the case of $\theta\ge 1$ in $(iii)$, we require $m\gtrsim\frac{n}{\log n}\log\frac{n}{\log n}$ %1+\frac{\log\left(1-\frac{\log n}{\log\log n}\right)}{\log n}$ 
which means asymptotically $m$ should be of the same order as $n$. On the other hand, the approximate leverage score Nystr\"{o}m subsampling gives same convergence rates as that of EKPCA but requiring far fewer samples than that for NY-KPCA with plain Nystr\"{o}m subsampling. 
%converges to $\gamma\ge 1$, which means the rate of $\left(\log n /n\right)^{1-\frac{1}{\alpha}}$ is attained for $m=n$. 
These results show that for the interesting case of $\theta<1$ where EKPCA performance matches with that of KPCA, NY-KPCA also achieves similar performance, albeit with lower computational requirement. 
%The convergence rate of $R_{C_n,\ell}^{nys}$ is optimized for $\beta=\theta$.  This rate is $n^{-\theta(1-\frac{1}{\alpha})}$, which is also the convergence rate of $R_{C_n,\ell}$ when $\theta<1$.  Now if we choose $\gamma<1$ and $\beta=\theta<\gamma$, then NY-KPCA obtains the same asymptotic convergence rate of EKPCA with less computational complexity. 
%\\
%\indent \emph{Remark 3}: The convergence rate of $R_{C_n,\ell}$ is optimized for $\theta=1$; however, setting $\beta=\theta=1$ is not possible, as it violates $\beta<\min(\gamma,1)$.  Further, for any $\theta\ge 1$, it must be the case that $\beta<\theta$, and the convergence rate of $R_{C_n,\ell}^{nys}$, $n^{-\beta(1-\frac{1}{\alpha})}$, is slower than that of $R_{C_n,\ell}$, $\left(\frac{\log n}{n}\right)^{1-\frac{1}{\alpha}}$.
%\\
%\textcolor{red}{I think I should add a couple of sentences regarding the size of $m,\gamma$ in these remarks.  I am not sure if it would be better to add one new remark on the size of $m$, or just add to the existing.}
\end{remark}
\begin{corollary}[Exponential decay of eigenvalues]\label{exp decay corollary}
Suppose $\underbar{B}e^{-\tau i}\le\lambda_i\le\bar{B}e^{-\tau i}$ for $\tau>0$ and $\underbar{B},\bar{B}\in(0,\infty)$. Let $\ell=\frac{1}{\tau}\log n^\theta$ for $\theta>0$.
Then the following hold:
\\\\
$(i)$ $$n^{-\theta}\lesssim R_{C,\ell}\lesssim n^{-\theta};$$
$(ii)$ \[ R_{C_n,\ell}\lesssim_{\Pb^n}\begin{cases} 
      n^{-\theta}\log n ,\qquad\quad\theta<1\\

      n^{-1}(\log n)^2,\qquad\theta\ge 1
   \end{cases};
\]
$(iii)$ For plain Nystr\"om subsampling:
\[ R^{nys}_{C_n,\ell}\lesssim_{\Pb^n}\begin{cases} 
      n^{-\theta}\log n,\qquad\quad\theta<1,\,m\gtrsim n^\theta\log n\\
      n^{-1}(\log n)^2,\qquad\theta\ge 1,\,m\gtrsim \frac{n}{\log n}\log\frac{n}{\log n}
   \end{cases};
\]
$(iv)$ For approximate leverage score Nystr\"om subsampling:
\[ R^{nys}_{C_n,\ell}\lesssim_{\Pb^n}\begin{cases} 
      n^{-\theta}\log n,\qquad\quad\theta<1,\,m\gtrsim(\log n)^2 \\
      n^{-1}(\log n)^2,\qquad\theta\ge 1,\,m\gtrsim\log n\log\frac{n}{\log n}
   \end{cases}.
\]

%\[ R_{C_N,\ell}^{nys}\lesssim_{\Pb^n}\begin{cases} 
%      \frac{\log n}{n^\theta},\hspace{5mm}\beta\ge\theta \\
%      \frac{\log n}{n^\beta},\hspace{5mm}\beta<\theta
%   \end{cases}
%\]
%where $0<\beta<\min(1,\gamma)$.
\end{corollary}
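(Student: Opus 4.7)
The plan is to derive the corollary by plugging the exponential eigenvalue decay into the finite-sample bounds: the EKPCA bound of \citep[Theorem 3.1]{Rudi-15} as restated in \eqref{Eq:ekpca} for part (ii), and Theorem \ref{main theorem}, (i) and (ii), for parts (iii) and (iv). The only nontrivial ingredient is to control the effective dimension $\mathcal{N}_C(t)$ under exponential decay, and to pick $t$ so as to balance the bias $\lambda_\ell$ against the variance $t$, while respecting the lower bounds $t \gtrsim \tfrac{\log n}{n}$ imposed by the theorem.

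I would first dispense with part (i). Since $\lambda_i \asymp e^{-\tau i}$ and $(\phi_i)$ is an ONB, Parseval together with \eqref{Eq:equiv} gives
\begin{equation*}
R_{C,\ell}=\sum_{i>\ell}\lambda_i \asymp \sum_{i>\ell} e^{-\tau i} \asymp e^{-\tau\ell} = n^{-\theta}.
\end{equation*}
Next I would estimate $\mathcal{N}_C(t)$. Splitting the sum $\sum_i \tfrac{\lambda_i}{\lambda_i+t}$ at the threshold index $i_\star(t):=\tfrac{1}{\tau}\log(1/t)$, the first block contributes at most $i_\star(t)$ while the tail is bounded by $\tfrac{1}{t}\sum_{i>i_\star(t)}\lambda_i = O(1)$, so $\mathcal{N}_C(t)\asymp \log(1/t)$ as $t\to 0$. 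A matching lower bound follows similarly from the lower bound on $\lambda_i$. For the leverage-score theorem I would also use the trivial bound $\mathcal{N}_{C,\infty}(t)\le \kappa/t$.

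With these in hand I would do the bookkeeping case by case. For $\theta<1$ we have $n^{-\theta}\ge \tfrac{\log n}{n}$, so choosing $t \asymp n^{-\theta}$ is admissible; then $\lambda_\ell+t \asymp n^{-\theta}$ and $\mathcal{N}_C(t)\asymp \log n$, yielding the bound $n^{-\theta}\log n$. For $\theta\ge 1$ the constraint forces $t \asymp \tfrac{\log n}{n}$, which dominates $\lambda_\ell = n^{-\theta}$; then $\mathcal{N}_C(t)\asymp \log(n/\log n)\asymp \log n$ and the bound becomes $\tfrac{(\log n)^2}{n}$. This gives (ii), and it also gives (iii) and (iv) since the reconstruction-error factor in Theorem~\ref{main theorem} has the same shape as \eqref{Eq:ekpca} up to constants.

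Finally, the requirement on $m$ is what distinguishes plain from approximate-leverage Nystr\"om. For plain Nystr\"om, Theorem~\ref{main theorem}(i) asks for $m\gtrsim \mathcal{N}_{C,\infty}(t)\log(1/t) \asymp \tfrac{1}{t}\log(1/t)$, which under the two choices of $t$ above gives $m\gtrsim n^\theta \log n$ and $m\gtrsim \tfrac{n}{\log n}\log\tfrac{n}{\log n}$ respectively. For ALS Nystr\"om, Theorem~\ref{main theorem}(ii) asks instead for $m \gtrsim \mathcal{N}_C(t)\log(n/\delta)\asymp \log(1/t)\log n$, which yields $m\gtrsim(\log n)^2$ for $\theta<1$ and $m\gtrsim \log n\log(n/\log n)$ for $\theta\ge 1$, exactly as claimed. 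The main obstacle, such as it is, is the bookkeeping between the two regimes of $\theta$ and checking that the chosen $t$ lies in the admissible interval $[\frac{c\kappa\log n}{n},\lambda_1]$ uniformly in large $n$; the rest is direct substitution.
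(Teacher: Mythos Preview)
Your proposal is correct and follows essentially the same route as the paper: both plug the exponential decay into the finite-sample bound $R\lesssim \mathcal{N}_C(t)(\lambda_\ell+t)$, establish $\mathcal{N}_C(t)\lesssim\log(1/t)$, and then choose $t\asymp n^{-\theta}$ (for $\theta<1$) or $t\asymp\tfrac{\log n}{n}$ (for $\theta\ge1$), reading off the $m$ requirements from the conditions of Theorem~\ref{main theorem}. The only cosmetic difference is that the paper bounds $\mathcal{N}_C(t)$ via an explicit antiderivative (Proposition~\ref{N(t) bnd exp}) whereas you split the sum at the threshold index $i_\star(t)=\tfrac{1}{\tau}\log(1/t)$; both yield the same estimate.
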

Corollary~\ref{exp decay corollary} shares similar behavior to that Corollary~\ref{poly decay corollary} as discussed in Remark~\ref{rem1} but just that it yields faster rates since the RKHS is smooth as determined by the rate of decay of eigenvalues. In addition, the approximate leverage score Nystr\"om subsampling based KPCA requires only $(\log n)^2$ subsamples to match the performance of EKPCA resulting in substantial computational savings without any loss in statistical accuracy.

As mentioned in Section~\ref{Sec:introduction}, the above results are the first of the kind related to computational vs. statistical trade-off in kernel PCA. While \citep{Sriperumbudur_Sterge,streaming_kpca} studied similar question for kernel PCA using random features, the results are not directly comparable because of the different cost function considered in these works. To elaborate, these works also considered the reconstruction error defined in \eqref{Eq:errors} through \eqref{Eq:equiv}, however, in $L^2(\bb{P})$ norm, which is weaker than the RKHS norm. 
For classical PCA this would correspond to considering the error
$$
\E[( X^\top (I- P^\ell(C))X)^2]
$$
rather than~\eqref{rec}. This choice is made necessary by the fact that random features corresponding to a kernel, might in general not belong to the corresponding RKHS. Clearly this error choice  does not allow a direct comparison to the convergence behavior of KPCA. 
% Indeed,  it can be shown to correspond to a weaker notion of error leading and hence lead to more
% \textcolor{red}{To add about the random features not in RKHS and thats why different cost function has to be used.}

%While such a consideration is necessary to make the cost function valid in the case of random feature approximation, it does not allow a direct comparison to the convergence behavior of KPCA. \textcolor{red}{To add about the random features not in RKHS and thats why different cost function has to be used.}
 
%\emph{Remark 1}: We note that the reconstruction error associated with KPCA always has a faster convergence rate to zero than that of EKPCA or NYKPCA.
%\emph{Remark 2}: When 
\section{Experiments}\label{Sec:Expts}

The goal of our experiments is to demonstrate on benchmark data that NY-KPCA achieves similar error to that of EKPCA, with significantly less computation time. For our experiments, we use the samples pertaining to the digits 2 and 5 in the MNIST handwritten digit dataset, \url{http://yann.lecun.com/exdb/mnist/}, yielding sample sizes of $n=5958$ and $n=5421$, respectively with each sample belonging to $\R^{784}$.
%consisting of 60,000 training instances of handwritten digits 0-9 represented as points $X_i\in\R^{784}$. We observe the digits 2 and 5, one at a time, yielding sample sizes of $n=5958$ and $n=5421$, respectively. 
EKPCA is performed on each of these two digits using a Gaussian kernel, $k(\cdot,x)=\exp\{{-\sigma\norm{\cdot-x}_2^2}\},$
with $\sigma=1\times10^{-7}$ and NY-KPCA is performed with plain Nystr\"om subsampling, i.e., uniformly without replacement, for $m=$100, 500 and 1000 %, and 2500 
Nystr\"om subsamples with 100 repetitions being performed for each $m$ to generate error bars. The reconstruction error is measured as %Error is measured by the empirical reconstruction error 
$$\hat{R}(P):=\frac{1}{n}\sum_{i=1}^n\norm{k(\cdot,X_i)-Pk(\cdot,X_i)}_\Hk^2,$$
with $P:\Cal{H}\rightarrow\Cal{H}$ chosen to be $P^\ell(C_n)$ and $P^\ell_m(C_n)$ for EKPCA and NY-KPCA respectively. These quantities can be computed as %We may write
\begin{equation}\label{eq:empirical recons error 1}
    \hat{R}(P^\ell(C_n))=\frac{1}{n}\text{tr}(\mathbf{K})-\frac{1}{n^2}\sum^\ell_{j=1}\frac{\bm{\alpha}^\top_j\mathbf{K}^2\bm{\alpha}_j}{\hat{\lambda}_j}=\sum^n_{i=\ell+1}\hat{\lambda}_i
%    \frac{1}{n}\sum_{i=1}^n\left(k(X_i,X_i)-\sum_{j=1}^\ell\frac{\alpha_j^\top\mathbf{K}_{X_i}\mathbf{K}_{X_i}^\top\alpha_j}{n\hat{\lambda}_j}\right)
\end{equation}
and
\begin{equation}\label{eq:empirical recons error 2}
\hat{R}(P_m^\ell(C_n))=\frac{1}{n}\text{tr}(\mathbf{K})-\frac{1}{n}\sum_{j=1}^\ell\mathbf{u}_j^\top\mathbf{M}\mathbf{u}_j=\sum^n_{i=1}\hat{\lambda}_i-\sum^\ell_{i=1}\hat{\lambda}_{i,m},
    %\hat{R}(P_m^\ell(C_n))=\frac{1}{n}\sum_{i=1}^n\left(k(X_i,X_i)-\sum_{j=1}^\ell\frac{\mathbf{u}_j^\top\mathbf{K}_{mm}^{-1/2}\mathbf{K}_{X_i}\mathbf{K}_{X_i}^\top\mathbf{K}_{mm}^{-1/2}\mathbf{u}_j}{n\hat{\lambda}_{j,m}}\right)
\end{equation}
where $(\hat{\lambda}_j,\bm{\alpha}_j)_{j=1}^n$ and $(\hat{\lambda}_{j,m},\mathbf{u}_j)_{j=1}^m$ are the eigenvalue-vector pairs of $\frac{1}{n}\bf{K}$ and $\frac{1}{n}\mathbf{M}$ respectively. The number of principal components, $\ell$, is varied from $1$ to $m$. The results of the experiment are summarized in Figure~\ref{Fig:1}, where we observe that NY-KPCA has similar performance to that of EKPCA in terms of the empirical reconstruction error until a certain value of $\ell$ beyond which the performance seems to be surprisingly better than EKPCA. On the computational front, NY-KPCA is significantly faster than EKPCA with the latter having a runtime of 337 seconds. Similar behavior is observed for digit 2 and the results are presented in Figure~\ref{Fig:2}.
%shows that NY-KPCA has similar performance to that of EKPCA in terms of the empirical reconstruction error but with significant computational savings. in the supplement (see Section~\ref{Sec:digit2}).
\begin{figure}[!tbp]
  \centering\vspace{-20mm}
  \begin{minipage}[b]{0.8\textwidth}
    \includegraphics[width=\textwidth]{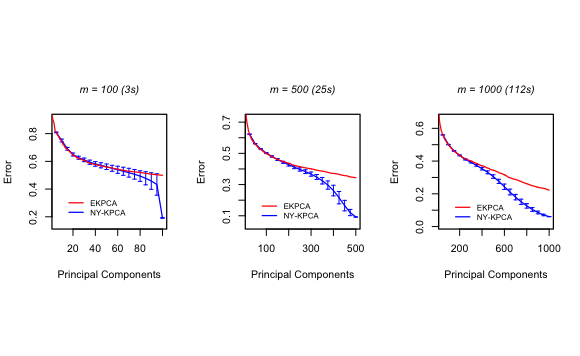}
    %\caption{Flower one.}
  \end{minipage}
  %\hfill
  %\begin{minipage}[b]{0.49\textwidth}
  %  \includegraphics[width=\textwidth]{nys2_plot32.png}
   % \caption{Flower two.}
  %\end{minipage}
  %\hfill
  %\begin{minipage}[b]{0.49\textwidth}
  %  \includegraphics[width=\textwidth]{nys2_plot33.png}
   % \caption{Flower two.}
  %\end{minipage}
  \vspace{-15mm}
  \caption{Empirical reconstruction error of EKPCA and average empirical reconstruction error of 100 repetitions of NY-KPCA on digit 5 versus number of principal components $\ell$; error bars represent $\pm 2$ standard deviations. Runtime in seconds is given in parentheses next to the number of Nystr\"om subsamples $m$. Runtime for EKPCA is 337 seconds.}
  \vspace{-4mm}
  \label{Fig:1}
\end{figure}
\begin{figure}[!tbp]
  \centering\vspace{0mm}
  \begin{minipage}[b]{0.8\textwidth}
    \includegraphics[width=\textwidth]{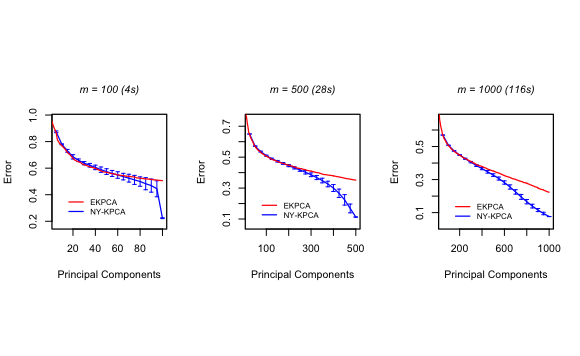}
    %\caption{Flower one.}
  \end{minipage}
  %\hfill
  %\begin{minipage}[b]{0.49\textwidth}
  %  \includegraphics[width=\textwidth]{nys2_plot32.png}
   % \caption{Flower two.}
  %\end{minipage}
  %\hfill
  %\begin{minipage}[b]{0.49\textwidth}
  %  \includegraphics[width=\textwidth]{nys2_plot33.png}
   % \caption{Flower two.}
  %\end{minipage}
  \vspace{-15mm}
  \caption{Empirical reconstruction error of EKPCA and average empirical reconstruction error of 100 repetitions of NY-KPCA on digit 5 versus number of principal components $\ell$; error bars represent $\pm 2$ standard deviations. Runtime in seconds is given in parentheses next to the number of Nystr\"om subsamples $m$. Runtime for EKPCA is 478 seconds.}
  \vspace{-1mm}
  \label{Fig:2}
\end{figure}
\begin{comment}
\begin{figure}[!tbp]
    \centering
    \includegraphics[width=.8\textwidth]{nys5_plot.png}
    \vspace{-13mm}
    \caption{Empirical reconstruction error of EKPCA and average empirical reconstruction error of 100 repetitions of NY-KPCA on digit 5 versus number of principal components $\ell$; error bars represent $\pm 2$ standard deviations.  Runtime in seconds is given in parentheses next to the number of of Nystr\"om subsamples $m$. Runtime for EKPCA is 337 seconds.}
    \label{Fig:1}
\end{figure}
\end{comment}
\section{Proofs}\label{Sec:Proofs}
In this section, we present the proofs.
\subsection{Proof of Proposition~\ref{ny-kpca soln}}\label{sec:ny-kpca soln}
%\begin{proof}
Define $$Z_n:\Hk\rightarrow\R^n,\,f\mapsto\left(f(X_1),\ldots,f(X_n)\right)^\top\,\,\text{and}\,\,\tilde{Z}_m:\Hk\rightarrow\R^m,\,f\mapsto\left(f(\tilde{X}_1),\ldots,f(\tilde{X}_n)\right)^\top.$$
The adjoint of $\tilde{Z}_m$ \citep{Smale-07} is given by
$$\tilde{Z}_m^*:\R^m\rightarrow\Hk,\,\alpha\mapsto\sum_{i=1}^m\alpha_ik(\cdot,\tilde{X}_i).$$
Thus, any $f\in\Hk_m$ may be written as $\tilde{Z}_m^*\alpha$, for some $\alpha\in\R^m$ and so
%Using Proposition \ref{sampling oper prop},
$$\inner{f}{C_nf}_\Hk=\frac{1}{n}\inner{\tilde{Z}_m^*\alpha}{Z_n^*Z_n\tilde{Z}_m^*\alpha}_\Hk=\frac{1}{n}\alpha^\top\tilde{Z}_mZ_n^*Z_n\tilde{Z}_m^*\alpha,$$
where we used $Z^*_nZ_n=\frac{1}{n}C_n$. It is easy to verify that $Z_n\tilde{Z}^*_m=\mathbf{K}_{nm}$ and $\tilde{Z}_mZ^*_n=\mathbf{K}_{mn}$. Therefore, 
%and again applying Proposition \ref{sampling oper prop} we may rewrite 
(\ref{NY-KPCA problem defin}) can be written as
\begin{equation}\label{ny-kpca rewrite}
   \arg\sup \left\{\frac{1}{n}\alpha^\top\mathbf{K}_{mn}\mathbf{K}_{nm}\alpha:\alpha^\top\mathbf{K}_{mm}\alpha=1\right\}.
\end{equation}
Letting $\mathbf{u}=\mathbf{K}_{mm}^{1/2}\alpha$ simplifies the constraint in (\ref{ny-kpca rewrite}) to $\mathbf{u}^\top\mathbf{u}=1$, and we write (\ref{ny-kpca rewrite}) as
%\begin{equation}\label{ny-kpca rewrite 2}
$$\arg\sup\left\{\frac{1}{n}\mathbf{u}^\top\mathbf{K}_{mm}^{-1/2}\mathbf{K}_{mn}\mathbf{K}_{nm}\mathbf{K}_{mm}^{-1/2}\mathbf{u}:\mathbf{u}^\top\mathbf{u}=1\right\}.$$ %\nonumber
%\end{equation}
The solution to the above problem is the unit eigenvector of $\frac{1}{n}\mathbf{K}_{mm}^{-1/2}\mathbf{K}_{mn}\mathbf{K}_{nm}\mathbf{K}_{mm}^{-1/2}$ corresponding to its largest eigenvalue.  Denoting this eigenvector as $\mathbf{u}_{1,m}$, we obtain a function $\hat{\phi}_{1,m}\in\Hk$ solving the NY-KPCA problem in (\ref{NY-KPCA problem defin}) via $\hat{\phi}_{1,m}=\tilde{Z}_m^*\mathbf{K}_{mm}^{-1/2}\mathbf{u}_{1,m}.$
%\end{proof}
\subsection{Proof of \eqref{Eq:equiv}}
%Proposition \ref{rewrite recons err}}
\label{sec:rewrite recons err}
Note that
\begin{eqnarray}
R(P)&{}={}&\bb{E}\norm{(I-P)k(\cdot,X)}_\Hk^2=\bb{E}\inner{(I-P)k(\cdot,X)}{(I-P)k(\cdot,X)}_\Hk \nonumber \\
&{}={}&\bb{E}\inner{(I-P)k(\cdot,X)}{k(\cdot,X)}_\Hk=\bb{E}\inner{(I-P)}{k(\cdot,X)\oh k(\cdot,X)}_\HSH, \label{rewrite recons err 1}
\end{eqnarray}  
where we used $\inner{Bf}{g}_\Hk=\inner{B}{f\oh g}_\HSH$ and  $(I-P)^2=(I-P)$ in (\ref{rewrite recons err 1}). Since $k$ is bounded, it follows that
$$\bb{E}\inner{(I-P)}{k(\cdot,X)\oh k(\cdot,X)}_\HSH=\inner{(I-P)}{\bb{E}[k(\cdot,X)\oh k(\cdot,X)]}_\HSH.$$
The result follows by using the above in \eqref{rewrite recons err 1} and noting that
\begin{eqnarray*}
\inner{(I-P)}{C}_\HSH=\text{tr}\left((I-P)C\right)=\text{tr}\left(C^{1/2}(I-P)(I-P)C^{1/2}\right)
\\
=\inner{(I-P)C^{1/2}}{(I-P)C^{1/2}}_\HSH=\norm{(I-P)C^{1/2}}_\HSH^2,
\end{eqnarray*}
where we have used the invariance of trace under cyclic permutations. 
\begin{comment}
$(i)$ 
Note that
\begin{eqnarray}
R_{C,\ell}&{}={}&\bb{E}\norm{(I-P^\ell(C))k(\cdot,X)}_\Hk^2=\bb{E}\inner{(I-P^\ell(C))k(\cdot,X)}{(I-P^\ell(C))k(\cdot,X)}_\Hk \nonumber \\
&{}={}&\bb{E}\inner{(I-P^\ell(C))k(\cdot,X)}{k(\cdot,X)}_\Hk=\bb{E}\inner{(I-P^\ell(C))}{k(\cdot,X)\oh k(\cdot,X)}_\HSH, \label{rewrite recons err 1}
\end{eqnarray}  
where we used $\inner{Af}{g}_\Hk=\inner{A}{f\oh g}_\HSH$ and  $(I-P^\ell(C))^2=(I-P^\ell(C))$ in (\ref{rewrite recons err 1}). Since $k$ is bounded, it follows that
$$\bb{E}\inner{(I-P^\ell(C))}{k(\cdot,X)\oh k(\cdot,X)}_\HSH=\inner{(I-P^\ell(C))}{\bb{E}[k(\cdot,X)\oh k(\cdot,X)]}_\HSH.$$
The result follows by using the above in \eqref{rewrite recons err 1} and noting that
\begin{eqnarray*}
\inner{(I-P^\ell(C))}{C}_\HSH=\text{tr}\left((I-P^\ell(C))C\right)=\text{tr}\left(C^{1/2}(I-P^\ell(C))(I-P^\ell(C))C^{1/2}\right)
\\
=\inner{(I-P^\ell(C))C^{1/2}}{(I-P^\ell(C))C^{1/2}}_\HSH=\norm{(I-P^\ell(C))C^{1/2}}_\HSH^2,
\end{eqnarray*}
where we have used the invariance of trace under cyclic permutations. $(ii)$ and $(iii)$ follow verbatim by replacing $P^\ell(C)$ with $P^\ell(C_n)$ and $P_m^\ell(C_n)$ respectively.
\end{comment}
%\end{proof}

\subsection{Proof of Theorem \ref{main theorem}}\label{main theorem proof}
%\begin{proof}
%$(i)$ We note that 
%$$R_{C,\ell}=\norm{(I-P^\ell(C))C^{1/2}}^2_{\HSH}=\text{tr}\left(C(I-P^\ell(C))\right)=\sum_{i>\ell}\lambda_i.$$
%$(ii)$ \citep{samp_comp_sub_learn}\\
$(i)$ For $t>0$, we have %\textcolor{red}{(this $t$ may need to be adjusted/restricted)} we have
\begin{eqnarray}
R_{C_n,\ell}^{nys}&{}={}&\norm{(I-P_m^\ell(C_n))C^{1/2}}_\HSH^2=\norm{(I-P_m^\ell(C_n))(C_n+tI)^{1/2}(C_n+tI)^{-1/2}C^{1/2}}_\HSH^2\nonumber\\
&{}\le{}&\norm{(I-P_m^\ell(C_n))(C_n+tI)^{1/2}}_\OPH^2\norm{(C_n+tI)^{-1/2}C^{1/2}}_\HSH^2.\label{Eq:nystrom}
\end{eqnarray}
We now bound the terms in \eqref{Eq:nystrom}. First, we have
\begin{eqnarray}
    \norm{(C_n+tI)^{-1/2}C^{1/2}}_\HSH^2&{}={}&\norm{(C_n+tI)^{-1/2}(C+tI)^{1/2}(C+tI)^{-1/2}C^{1/2}}_\HSH^2
    \nonumber \\
    &{}\le{}&\norm{(C_n+tI)^{-1/2}(C+tI)^{1/2}}_\OPH^2\norm{(C+tI)^{-1/2}C^{1/2}}_\HSH^2
    \nonumber \\
    &{}={}&\underbrace{\norm{(C_n+tI)^{-1/2}(C+tI)^{1/2}}_\OPH^2}_{(A)}\Cal{N}_C(t),
\end{eqnarray}
where we used the fact $\norm{(C+tI)^{-1/2}C^{1/2}}_\HSH^2=\text{tr}(C^{1/2}(C+tI)^{-1}C^{1/2})=\text{tr}((C+tI)^{-1}C)=:\Cal{N}_C(t).$
Next, we have
\begin{eqnarray}
\norm{(I-P_m^\ell(C_n))(C_n+tI)^{1/2}}_\OPH^2 &{}\le{}&2\underbrace{\norm{(I-P_m)(C_n+tI)^{1/2}}_\OPH^2}_{(B)}\nonumber\\
&{}{}&\qquad+2\underbrace{\norm{(P_m-P_m^\ell(C_n))(C_n+tI)^{1/2}}_\OPH^2}_{(D)}, \label{main thm decomp 1}
\end{eqnarray}
where $P_m=Z_m^*(\mathbf{K}_{mm})^{-1}Z_m$ is the orthogonal projector onto $\Hk_m$ (see Section \ref{ny-kpca subsection}). $(B)$ can be bounded as 
\begin{equation}\label{main thm bnd 3}
    (B)\le\underbrace{\norm{(I-P_m)(C+tI)^{1/2}}_\OPH^2}_{(B_1)}\underbrace{\norm{(C+tI)^{-1/2}(C_n+tI)^{1/2}}_\OPH^2}_{(B_2)},
\end{equation}
and $(D)$ as
\begin{eqnarray}
(D)&{}\stackrel{(*)}{=}{}&\norm{(I-P_m^\ell(C_n))P_m(C_n+tI)^{1/2}}_\OPH^2\nonumber\\% \label{main thm decomp 2} \\
    &{}={}&\norm{(I-P_m^\ell(C_n))P_m(C_n+tI)P_m(I-P_m^\ell(C_n))}_\OPH \nonumber \\
    &{}\le{}&\norm{(I-P_m^\ell(C_n))P_mC_nP_m(I-P_m^\ell(C_n))}_\OPH %\nonumber\\
    %&{}{}&\quad\quad
    +t\,\norm{(I-P_m^\ell(C_n))P_m(I-P_m^\ell(C_n))}_\OPH,\nonumber\\
    &{}\stackrel{(**)}{\le}{}&\hat{\lambda}_{\ell+1,m}+t,\label{Eq:D}
\end{eqnarray}
where we used the facts that $\Cal{R}(P_m^\ell(C_n))\subset\Cal{R}(P_m)$ in $(*)$ and $P_m^\ell(C_n)$ projects onto the $\ell$-eigenspace of $P_mC_nP_m$ in $(**)$. $\hat{\lambda}_{\ell+1,m}$ can be bounded as
\begin{equation}\label{main thm eigen bnd 1}
    \hat{\lambda}_{\ell+1,m}\le|\hat{\lambda}_{\ell+1,m}-\hat{\lambda}_{\ell+1}|+\hat{\lambda}_{\ell+1}\stackrel{(\dagger)}{\le}\frac{1}{n}\norm{\tilde{\mathbf{K}}-\mathbf{K}}_{\OPRn}+\hat{\lambda}_\ell,
\end{equation}
where $(\dagger)$ follows from the Hoffman-Wiendladt inequality \citep{Bhatia-94}.  We may rewrite (\ref{main thm eigen bnd 1}) as
\begin{eqnarray}
\frac{1}{n}\norm{\tilde{\mathbf{K}}-\mathbf{K}}_{\OPRn}&{}={}&\frac{1}{n}\norm{Z_n(I-P_m)Z_n^*}_{\OPRn}\nonumber\\
&{}={}&\norm{(I-P_m)C_n(I-P_m)}_\OPH=\norm{C_n^{1/2}(I-P_m)C_n^{1/2}}_\OPH \nonumber \\
&{}\le{}&\norm{C_n^{1/2}(C+tI)^{-1/2}}_\OPH^2\norm{(C+tI)^{1/2}(I-P_m)}_\OPH^2 \nonumber \\
&{}\stackrel{(\ddagger)}{\le}{}&\norm{(C_n+tI)^{1/2}(C+tI)^{-1/2}}_\OPH^2\norm{(C+tI)^{1/2}(I-P_m)}_\OPH^2,\label{Eq:K}
\end{eqnarray}

where we used $$\norm{C_n^{1/2}(C+tI)^{-1/2}}_\OPH^2\le \norm{C_n^{1/2}(C_n+tI)^{-1/2}}_\OPH^2 \norm{(C_n+tI)^{1/2}(C+tI)^{-1/2}}_\OPH^2$$
and $\Vert C_n^{1/2}(C_n+tI)^{-1/2}\Vert_\OPH^2\le 1$ in $(\ddagger)$. The result follows by combining \eqref{Eq:nystrom}--\eqref{Eq:K} and employing Lemmas \ref{lem:1} and \ref{lem:2} for $(iii)$.
\vspace{2mm}\\
$(ii)$ The proof follows exactly as in $(i)$; however, we bound $\norm{(I-P_m)(C+tI)^{1/2}}_\OPH^2$ with Lemma \ref{lem:3} with $t_0=\frac{19\kappa}{n}\log\frac{2n}{\delta}$.
%\end{proof}

%The first part of the following Lemma is from \citep{samp_comp_sub_learn} Lemma 3.6., while the second part is a slight variation.
\begin{lemma}\label{lem:1}
For $\delta>0$, suppose $\frac{9\kappa}{n}\log\frac{n}{\delta}\le t\le\lambda_1$.  Then the following hold:
\begin{itemize}
    \item[(i)] $\Pb^n\left\{\sqrt{\frac{2}{3}}\le\norm{(C+tI)^{1/2}(C_n+tI)^{-1/2}}_\OPH\le\sqrt{2}\right\}\ge 1-\delta;$
    \item[(ii)] $\Pb^n\left\{\norm{(C+tI)^{-1/2}(C_n+tI)^{1/2}}_\OPH\le\sqrt{\frac{3}{2}}\right\}\ge 1-\delta;$
    \item[(iii)] $\Pb^n\left\{\hat{\lambda}_\ell+t\le\frac{3}{2}(\lambda_\ell+t)\right\}\ge 1-\delta.$
\end{itemize}
%$$(i)\hspace{5mm}
%\Pb^n\left\{\sqrt{\frac{2}{3}}\le\norm{(C+tI)^{1/2}(C_n+tI)^{-1/2}}_\OPH\le\sqrt{2}\right\}\ge 1-\delta$$
%$$(ii)\hspace{5mm}\Pb^n\left\{\norm{(C+tI)^{-1/2}(C_n+tI)^{1/2}}_\OPH\le\sqrt{\frac{3}{2}}\right\}\ge 1-\delta$$
\end{lemma}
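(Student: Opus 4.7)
The plan is to reduce all three claims to a single high-probability bound on the self-adjoint error operator
\[
E := (C+tI)^{-1/2}(C_n - C)(C+tI)^{-1/2},
\]
namely $\|E\|_{\OPH}\le 1/2$ on an event of probability at least $1-\delta$. Once this is available, (i)--(iii) all follow on the same event by purely deterministic operator manipulations, which is also why each part of the lemma is asserted with the same confidence $1-\delta$ (no union bound is needed).

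The starting point is the identity $(C+tI)^{-1/2}(C_n+tI)(C+tI)^{-1/2} = I + E$. When $\|E\|_{\OPH}\le 1/2$, this gives $\tfrac12 I \preceq I+E \preceq \tfrac32 I$, and therefore $\tfrac23 I \preceq (I+E)^{-1} \preceq 2I$. For (ii), this immediately yields $\|(C+tI)^{-1/2}(C_n+tI)^{1/2}\|_{\OPH}^{2} = \|I+E\|_{\OPH}\le 3/2$. For (i), inverting the identity gives $(C+tI)^{1/2}(C_n+tI)^{-1}(C+tI)^{1/2} = (I+E)^{-1}$, so $\|(C+tI)^{1/2}(C_n+tI)^{-1/2}\|_{\OPH}^{2} = \|(I+E)^{-1}\|_{\OPH}\in [2/3,2]$, delivering both bounds of (i). For (iii), conjugating $I + E \preceq \tfrac32 I$ by $(C+tI)^{1/2}$ gives $C_n + tI \preceq \tfrac32 (C+tI)$, hence $C_n \preceq \tfrac32 C + \tfrac12 tI$; Weyl's monotonicity then yields $\hat\lambda_\ell \le \tfrac32\lambda_\ell + \tfrac12 t$, i.e.\ $\hat\lambda_\ell + t \le \tfrac32(\lambda_\ell + t)$.

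The substantive step is the concentration bound on $E$. I would write $E = \tfrac1n\sum_{i=1}^n T_i$ with
\[
T_i := (C+tI)^{-1/2}\bigl(k(\cdot,X_i)\oh k(\cdot,X_i) - C\bigr)(C+tI)^{-1/2},
\]
which are i.i.d., mean-zero, self-adjoint Hilbert--Schmidt operators. The almost-sure operator-norm bound $\|T_i\|_{\OPH}\lesssim \kappa/t$ is immediate from $\langle k(\cdot,X_i),(C+tI)^{-1}k(\cdot,X_i)\rangle_{\Hk}\le \|k(\cdot,X_i)\|_{\Hk}^2/t\le \kappa/t$, and the second-moment estimate $\mathbb{E}[T_i^2]\preceq (\kappa/t)(C+tI)^{-1/2}C(C+tI)^{-1/2}$ gives $\|\mathbb{E}[T_i^2]\|_{\OPH}\le \kappa/t$. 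An operator Bernstein inequality in Hilbert space (of the sort used throughout \citep{Rudi-15,Caponnetto-07}) then furnishes, with probability at least $1-\delta$, a deviation bound of the form $\|E\|_{\OPH}\le c_1\sqrt{(\kappa/(nt))\log(n/\delta)} + c_2(\kappa/(nt))\log(n/\delta)$, and the hypothesis $t\ge 9\kappa n^{-1}\log(n/\delta)$ is calibrated precisely to drive this below $1/2$.

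The main obstacle is pinning down the exact constants in the Bernstein inequality so that the stated threshold $t\ge 9\kappa n^{-1}\log(n/\delta)$ is indeed sufficient and the intrinsic-dimension factor is absorbed cleanly into the single $\log(n/\delta)$ term; the subsequent algebra of (i)--(iii) is entirely routine and is essentially the same ``relative-norm'' decomposition already appearing inside the proof of Theorem~\ref{main theorem}.
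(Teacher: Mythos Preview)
Your proposal is correct and takes essentially the same approach as the paper: both reduce (i)--(iii) to the single high-probability event $\|B_n\|_{\OPH}\le 1/2$ (your $E=-B_n$), which the paper imports from \citep[Lemma~3.6]{Rudi-13} rather than re-deriving via Bernstein, and then obtain each part by the same deterministic operator inequalities you outline. Your explicit conjugation argument for (iii) is in fact cleaner than the paper's phrasing, which routes through the lower bound of (i) and a cited equivalence.
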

\begin{proof}
$(i)$ The result is quoted from Lemma 3.6 of \citep{Rudi-13} with $\alpha=\frac{1}{2}$.
\par\noindent
$(ii)$  This is a slight variation of $(i)$ and the proof idea follows that of Lemma 3.6 of \citep{Rudi-13} with $\alpha=\frac{1}{2}$. Note that
$$\norm{(C+tI)^{-1/2}(C_n+tI)^{1/2}}_\OPH=\norm{(C+tI)^{-1/2}(C_n+tI)(C+tI)^{-1/2}}_\OPH^{1/2}.$$
By defining $B_n=(C+tI)^{-1/2}(C-C_n)(C+tI)^{-1/2}$, we have
$$I-B_n=(C+tI)^{-1/2}\left((C+tI)-C+C_n\right)(C+tI)^{-1/2}=(C+tI)^{-1/2}(C_n+tI)(C+tI)^{-1/2}$$
and therefore
\begin{equation}\label{B_n decomp}
    \norm{(C+tI)^{-1/2}(C_n+tI)^{1/2}}_\OPH=\norm{I-B_n}_\OPH^{1/2}\le\left(1+\norm{B_n}_\OPH\right)^{1/2}.
\end{equation}
It follow from the proof of Lemma 3.6 of \citep{Rudi-13} that for $\frac{9\kappa}{n}\log\frac{n}{\delta}\le t$, %Now from \citep{samp_comp_sub_learn} we have that
\begin{equation}\label{B_n bound}
    \Pb^n\left\{\norm{B_n}_\OPH\le\frac{1}{2}\right\}\ge 1-\delta.
\end{equation}
%provided $\frac{9}{n}\log\frac{n}{\delta}\le t$.  
Combining (\ref{B_n decomp}) and (\ref{B_n bound}) completes the proof.\\
$(iii)$ Since $\sqrt{\frac{2}{3}}\le\norm{(C+tI)^{1/2}(C_n+tI)^{-1/2}}_\OPH$ as obtained in $(i)$, it is equivalent (see \cite[Lemmas B.2 and 3.5]{Rudi-13}) to $C_n+tI\preceq\frac{3}{2}(C+tI)$. This implies (see \citealp{gohberg}) that $\lambda_k(C_n+tI)\le \lambda_k(\frac{3}{2}(C+tI))=\frac{3}{2}\lambda_k(C+tI)$ for all $k\ge 1$.
%Lemma B.2 and Lemma 3.5.  Now for any two positive compact operators $A,B$, $A\preceq B$ implies $\lambda_k(A)\le\lambda_k(B)$ for all $k\ge1$ \citep{gohberg}, and the result follows.
\end{proof}
%The following lemma is from \citep[Lemma 6]{Rudi-15}.
\begin{lemma}[\citep{Rudi-15}, Lemma 6]\label{lem:2}
Suppose Assumption \ref{first kern assum} holds, and suppose for some $m<n$, the set $\{\tilde{X}_j\}_{j=1}^m$ is drawn uniformly from the set of all partitions of size $m$ of the training data, $\{X_i\}_{i=1}^n$.  For $t>0$ and any $\delta>0$ such that $m\ge\left(67\vee5\Cal{N}_{C,\infty}(t)\right)\log\frac{4\kappa}{t\delta}$, we have
$$\Pb^n\left\{\norm{(I-P_m)(C+tI)^{1/2}}_\OPH^2\le3t\right\}\ge1-\delta,$$
where $P_m$ is the orthogonal projector onto $\Hk_m=\emph{span}\{k(\cdot,\tilde{X}_j)|j\in[m]\}$.
\end{lemma}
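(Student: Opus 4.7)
The plan is to reduce the operator norm of interest to a concentration statement on a rank-$m$ empirical covariance built from the Nystr\"om subsample, and then invoke an operator Bernstein inequality. Define
\begin{equation*}
C_m \;:=\; \frac{1}{m}\sum_{j=1}^m k(\cdot,\tilde{X}_j)\oh k(\cdot,\tilde{X}_j),
\end{equation*}
so that $\overline{\mathcal{R}(C_m)}=\Hk_m$. Decomposing along $\Hk_m$ and $\Hk_m^\perp$ and using that $C_m$ kills $\Hk_m^\perp$ yields the operator inequality $I-P_m\preceq t(C_m+tI)^{-1}$. Conjugating by $(C+tI)^{1/2}$ gives
\begin{equation*}
\norm{(I-P_m)(C+tI)^{1/2}}_\OPH^2 \;=\; \norm{(C+tI)^{1/2}(I-P_m)(C+tI)^{1/2}}_\OPH \;\le\; t\,\norm{(C+tI)^{1/2}(C_m+tI)^{-1}(C+tI)^{1/2}}_\OPH.
\end{equation*}

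Next, I would write $C_m+tI=(C+tI)^{1/2}(I+A)(C+tI)^{1/2}$ with $A:=(C+tI)^{-1/2}(C_m-C)(C+tI)^{-1/2}$. The factor on the right of the previous display then equals $\norm{(I+A)^{-1}}_\OPH$, which is at most $1/(1-\norm{A}_\OPH)$ whenever $\norm{A}_\OPH<1$. Hence the claimed bound $3t$ follows once one can guarantee $\norm{A}_\OPH\le 2/3$ with probability at least $1-\delta$.

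The remaining work is the tail bound on $\norm{A}_\OPH$. Since the Nystr\"om points are drawn uniformly \emph{without replacement} from the training set, I would split through the empirical covariance $C_n$ by the triangle inequality:
\begin{equation*}
\norm{A}_\OPH \;\le\; \norm{(C+tI)^{-1/2}(C-C_n)(C+tI)^{-1/2}}_\OPH \;+\; \norm{(C+tI)^{-1/2}(C_n-C_m)(C+tI)^{-1/2}}_\OPH.
\end{equation*}
The first summand is the i.i.d.~concentration already controlled at the level of Lemma~\ref{lem:1}. For the second, I would condition on the training sample and view $C_m$ as the uniform-without-replacement average of the $n$ rank-one operators $\xi_i:=k(\cdot,X_i)\oh k(\cdot,X_i)$, whose conditional mean is $C_n$. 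An operator Bernstein inequality for sampling without replacement (via the Hoeffding reduction from without- to with-replacement, or directly through a Serfling--Bardenet--Maillard style argument) then applies, with per-summand almost-sure bound and variance proxy $\sup_i\norm{(C+tI)^{-1/2}\xi_i(C+tI)^{-1/2}}_\OPH\le \Cal{N}_{C,\infty}(t)$.

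The main obstacle is cleanly handling the dependence caused by sampling without replacement while tracking explicit constants. Routing through $C_n$ decouples the i.i.d.~population fluctuations from the subsampling fluctuations, so that each contribution to $\norm{A}_\OPH$ can be forced below $1/3$ as soon as $m\gtrsim \Cal{N}_{C,\infty}(t)\log(\kappa/(t\delta))$. The specific numerical constants $67$ and $5$ in the hypothesis come from absorbing the two concentration events under a union bound and inflating the Bernstein deviation by the factor required to certify the absolute gap $\norm{A}_\OPH\le 2/3$.
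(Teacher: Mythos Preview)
The paper does not prove this lemma; it is quoted verbatim from \citep[Lemma~6]{Rudi-15}. Your outline reproduces the argument given there: the reduction $I-P_m\preceq t(C_m+tI)^{-1}$, the identity $(C+tI)^{1/2}(C_m+tI)^{-1}(C+tI)^{1/2}=(I+A)^{-1}$, and the two-stage concentration---i.i.d.\ for $C_n-C$, then without-replacement for $C_m-C_n$ conditional on the sample via the Hoeffding reduction to with-replacement sampling---that forces $\norm{A}_\OPH\le 2/3$ and delivers the constants $67$ and $5$ after the union bound.
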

%\textcolor{blue}{
\begin{lemma}[\citep{Rudi-15}, Lemma 7]\label{lem:3}
Suppose Assumption \ref{first kern assum} holds.  Let $(\hat{l}_i(s))_{i=1}^n$ be the collection of approximate leverage scores.  Letting $N:=\{1,...,n\}$, for $t>0$ define $p_t$ as the distribution over $N$ with probabilities $p_t(i)=\hat{l}_i(t)/\sum_{j=1}^n\hat{l}_j(t)$.  Let $\Cal{I}_m=\{i_1,...,i_m\}\subset N$ be a collection of indices independently sampled from $p_t$ with replacement. Let $P_m$ be the orthogonal projector onto $\Hk_m=\emph{span}\{k(\cdot,\tilde{X}_j)|j\in\Cal{I}_m\}$.  Additionally, for any $\delta>0$, suppose the following hold:
\begin{enumerate}
    \item There exists $T\ge1$ and $t_0>0$ such that for any $s\ge t_0$, $(\hat{l}_i(s))_{i=1}^n$ are $T-$approximate leverage scores with confidence $\delta$,
    \item $n\ge1655\kappa+223\kappa\log\frac{2\kappa}{\delta},$
    \item $t_0\vee\frac{19\kappa}{n}\log\frac{2n}{\delta}\le t\le\lambda_1,$
    \item $m\ge334\log\frac{8n}{\delta}\vee78T^2\Cal{N}_C(t)\log\frac{8n}{\delta}$.
\end{enumerate}
Then
$$\Pb^n\left\{\norm{(I-P_m)(C+tI)^{1/2}}_\OPH^2\le3t\right\}\ge1-2\delta.$$
\end{lemma}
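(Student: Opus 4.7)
The plan is to reduce the operator bound $\norm{(I-P_m)(C+tI)^{1/2}}_\OPH^2\le 3t$ to a matrix-concentration inequality for a leverage-score weighted sum of rank-one operators built from the subsample, and then to combine this with the Lemma~\ref{lem:1}-style control that relates $C$ to $C_n$ at scale $t$. Since $I-P_m$ is an orthogonal projector, the quantity of interest equals $\norm{(C+tI)^{1/2}(I-P_m)(C+tI)^{1/2}}_\OPH$. A standard Alaoui--Mahoney / Rudi--Rosasco argument bounds $(C+tI)^{-1/2}(I-P_m)(C+tI)^{-1/2}$ in the Loewner order by a quantity governed by the weighted empirical operator
\[
\tilde{B}_m := \frac{1}{m}\sum_{j=1}^m \frac{1}{n\,\hat{p}_t(i_j)}\,(C+tI)^{-1/2}k(\cdot,X_{i_j})\oh k(\cdot,X_{i_j})(C+tI)^{-1/2},
\]
whose conditional expectation given $\Xm$ equals $(C+tI)^{-1/2}\bar{C}_n(C+tI)^{-1/2}$ with $\bar{C}_n = \frac{1}{n}\sum_{i=1}^n\frac{l_i(t)}{\hat{l}_i(t)}k(\cdot,X_i)\oh k(\cdot,X_i)$; by the $T$-approximation hypothesis this is sandwiched between $T^{-1}C_n$ and $T\,C_n$ in the Loewner order.

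Next I would apply a non-commutative Bernstein inequality (as in Tropp) to the deviation $\tilde{B}_m-\E[\tilde{B}_m\mid\Xm]$. Each summand has operator norm at most $T\kappa/t$ and the sum-of-variances has operator norm at most $T^2\Cal{N}_{C,\infty}(t)/m$, using boundedness of $k$ and the definition of $\hat{p}_t$. Requiring the spectral deviation to be at most a fixed fraction (say $1/2$) with probability $1-\delta$ yields a sample-size condition $m\gtrsim T^2\,\Cal{N}_{C,\infty}(t)\log(n/\delta)$, which collapses to the stated $m\ge 78T^2\Cal{N}_C(t)\log(8n/\delta)$ once $\Cal{N}_{C,\infty}(t)$ is absorbed into $\Cal{N}_C(t)$ via the regime $n\ge 1655\kappa+223\kappa\log(2\kappa/\delta)$; the companion requirement $m\ge 334\log(8n/\delta)$ covers the linear-tail regime of Bernstein. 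Chaining this concentration event with Lemma~\ref{lem:1}'s Loewner bound $C_n+tI\preceq\tfrac{3}{2}(C+tI)$, which is valid as soon as $t\ge\tfrac{9\kappa}{n}\log(n/\delta)$ (and is assumed here with the stronger $t\ge\tfrac{19\kappa}{n}\log(2n/\delta)$ to absorb the $T^2$ constants), converts the deviation bound for $\tilde{B}_m$ into the claimed $\norm{(C+tI)^{1/2}(I-P_m)(C+tI)^{1/2}}_\OPH\le 3t$.

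The main obstacle is the coordination of three sources of randomness at the data-dependent scale $t$: (i) the $T$-approximation event for $(\hat{l}_i(t))_i$; (ii) the Bernstein concentration of $\tilde{B}_m$ conditional on $\Xm$; and (iii) the random-in-$\Xm$ event that $C_n$ and $C$ are comparable at scale $t$. Since $t$ depends on $\Xm$ but not on the subsample, event (i) is invoked first and accounts for the $T^2$ factor in the variance; event (iii) is delicate because $\Cal{N}_C(t)$ itself is a population quantity and its role as a proxy for $\sum_i l_i(t)$ must be justified under the lower bound on $n$. The subtlety that allows the confidence $1-2\delta$ rather than $1-3\delta$ is that (iii) can be absorbed into (ii) by enlarging constants in the Bernstein step, so that the final union bound ranges only over (i) and (ii), each of mass $\delta$.
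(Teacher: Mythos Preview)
The paper does not prove this lemma; it is quoted from \citep[Lemma~7]{Rudi-15} and used as a black box in the proof of Theorem~\ref{main theorem}(ii). So there is no in-paper argument to compare against, only the cited source.

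Your overall plan---apply a matrix Bernstein inequality to the importance-weighted operator $\tilde{B}_m$ and then pass from $C_n$ to $C$ via a Lemma~\ref{lem:1}-type event---is indeed the route taken in \citep{Rudi-15}. But two of your concrete steps are wrong, and the second one is exactly the mechanism that distinguishes this lemma from Lemma~\ref{lem:2}.

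First, since you both sample from and reweight by $\hat p_t$, the importance weights cancel in expectation, so $\E[\tilde B_m\mid\Xm]=(C+tI)^{-1/2}C_n(C+tI)^{-1/2}$, not the $\bar C_n$ you wrote. The ratio $l_i/\hat l_i$ never enters the mean; it enters only the norm and variance bounds through the $T$-approximation.

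Second, your per-summand bound ``operator norm at most $T\kappa/t$'' and variance ``at most $T^2\Cal{N}_{C,\infty}(t)/m$'' miss the point of leverage-score sampling. Using $n\,l_i(t)=\Vert(C_n+tI)^{-1/2}k(\cdot,X_i)\Vert_\Hk^2$ and $\sum_j l_j(t)=\Cal{N}_{C_n}(t)$, one has
\[
\frac{1}{n\,\hat p_t(i)}\bigl\Vert(C+tI)^{-1/2}k(\cdot,X_i)\bigr\Vert_\Hk^2
=\frac{\sum_j\hat l_j(t)}{\hat l_i(t)}\cdot\frac{\Vert(C+tI)^{-1/2}k(\cdot,X_i)\Vert_\Hk^2}{\Vert(C_n+tI)^{-1/2}k(\cdot,X_i)\Vert_\Hk^2}
\le T^2\,\Cal{N}_{C_n}(t)\,\bigl\Vert(C+tI)^{-1/2}(C_n+tI)^{1/2}\bigr\Vert_\OPH^2,
\]
which, on the event of Lemma~\ref{lem:1}, is $\lesssim T^2\Cal{N}_C(t)$. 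This is why the condition on $m$ involves $\Cal{N}_C(t)$ directly and not $\Cal{N}_{C,\infty}(t)$. Your claim that ``$\Cal{N}_{C,\infty}(t)$ is absorbed into $\Cal{N}_C(t)$ via the regime $n\ge1655\kappa+\cdots$'' cannot work: in general only $\Cal{N}_C(t)\le\Cal{N}_{C,\infty}(t)$ holds, and if that absorption were the argument, leverage scores would offer no improvement over the uniform sampling of Lemma~\ref{lem:2}. The large-$n$ condition is used instead to guarantee that the empirical quantities $l_i(t)$ and $\Cal{N}_{C_n}(t)$ are comparable to their population counterparts at the scale $t\ge\tfrac{19\kappa}{n}\log\tfrac{2n}{\delta}$.
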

%}
\subsection{Proof of Corollary \ref{poly decay corollary}}\label{poly decay proof}
%\begin{proof}
$(i)$ From Theorem \ref{main theorem} $(i)$ we have %$R_{C,\ell}=T(\ell)$.  Now
$$
%\Cal{T}(\ell)
R_{C,\ell}=
\sum_{i>\ell}\lambda_i\lesssim\sum_{i>\ell} i^{-\alpha}\lesssim\int_\ell^\infty x^{-\alpha}dx\lesssim\ell^{1-\alpha}=n^{-\theta(1-\frac{1}{\alpha})}.$$
Similarly,
$$
%\Cal{T}(\ell)
R_{C,\ell}=\sum_{i>\ell}\lambda_i\gtrsim\sum_{i>\ell} i^{-\alpha}\gtrsim\int_\ell^\infty x^{-\alpha}dx\gtrsim\ell^{1-\alpha}=n^{-\theta(1-\frac{1}{\alpha})}.$$
\\
$(ii)$ This is Theorem 3.2 of  \citep{Rudi-15} with $\alpha=\frac{1}{2}$, $r=\alpha$, $p=2$, and $\ell=n^{\frac{\theta}{\alpha}}$.  %The result follows from noting that $\ell<\ell^*$ is equivalent to $\theta<1$.
\vspace{1mm}\\
$(iii)$  Theorem \ref{main theorem} $(iii)$ and Proposition \ref{N(t) bnd} yield
$$R_{C_n,\ell}^{nys}\lesssim_{\Pb^n}t^{-\frac{1}{\alpha}}n^{-\theta}+t^{1-\frac{1}{\alpha}}\le\begin{cases}t^{1-\frac{1}{\alpha}},\qquad\quad\hspace{1mm} t\ge n^{-\theta}\\
t^{-\frac{1}{\alpha}}n^{-\theta},\qquad t\le n^{-\theta}
\end{cases},$$
where $\frac{\log n}{n}\lesssim t\le\lambda_1$ and $m\gtrsim\Cal{N}_{C,\infty}(t)\log\frac{1}{t}$ with $\Cal{N}_{C,\infty}(t)=\sup_{x\in\X}\inner{k(\cdot,x)}{(C+tI)^{-1}k(\cdot,x)}_\Hk\lesssim\frac{1}{t}$.
%, where the constants depend on $\kappa$ and $\delta$ but not $n,m,\ell$.  We also have $\Cal{N}_\infty(t)=\sup_{x\in\X}\inner{k(\cdot,x)}{(C+tI)^{-1}k(\cdot,x)}_\Hk\lesssim\frac{1}{t}=n^\beta$.
First, consider the case when $t\ge n^{-\theta}$. This means 
$$R_{C_n,\ell}^{nys}\lesssim \inf\left\{t^{1-\frac{1}{\alpha}}: t\gtrsim \frac{\log n}{n}\vee n^{-\theta},m\gtrsim\frac{1}{t}\log\frac{1}{t}\right\}.$$ For $\theta<1$, we obtain
$$R_{C_n,\ell}^{nys}\lesssim \inf\left\{t^{1-\frac{1}{\alpha}}: t\gtrsim n^{-\theta},m\gtrsim\frac{1}{t}\log\frac{1}{t}\right\}\le n^{-\theta\left(1-\frac{1}{\alpha}\right)}$$
if $m\gtrsim n^\theta\log n$.
%, i.e., $\gamma>\theta$. 
For $\theta\ge 1$, we obtain
$$R_{C_n,\ell}^{nys}\lesssim \inf\left\{t^{1-\frac{1}{\alpha}}: t\gtrsim \frac{\log n}{n},m\gtrsim\frac{1}{t}\log\frac{1}{t}\right\}\le \left(\frac{\log n}{n}\right)^{\left(1-\frac{1}{\alpha}\right)}$$
if $m\gtrsim \frac{n}{\log n}\log\frac{n}{\log n}$. Next, consider the case when $t\le n^{-\theta}$ which means
$$R_{C_n,\ell}^{nys}\lesssim \inf\left\{t^{-\frac{1}{\alpha}}n^{-\theta}: \frac{\log n}{n}\lesssim t\lesssim  n^{-\theta},m\gtrsim\frac{1}{t}\log\frac{1}{t}\right\}\le n^{-\theta\left(1-\frac{1}{\alpha}\right)}$$
when $\theta<1$ and $m\gtrsim n^\theta\log n$.\vspace{2mm}\\
$(iv)$ Theorem \ref{main theorem}$(iv)$ and Proposition \ref{N(t) bnd} yield
$$R_{C_n,\ell}^{nys}\lesssim_{\Pb^n}t^{-\frac{1}{\alpha}}n^{-\theta}+t^{1-\frac{1}{\alpha}}\le\begin{cases}t^{1-\frac{1}{\alpha}},\qquad\quad\hspace{1mm} t\ge n^{-\theta}\\
t^{-\frac{1}{\alpha}}n^{-\theta},\qquad t\le n^{-\theta}
\end{cases},$$
where $\frac{\log n}{n}\lesssim t\le\lambda_1$ and $m\gtrsim\Cal{N}_C(t)\log n\gtrsim t^{-\frac{1}{\alpha}}\log n$. The result follows by carrying out the analysis as in $(iii)$ for $\theta<1$ and $\theta\ge 1$.

\subsection{Proof of Corollary \ref{exp decay corollary}}\label{exp decay proof}
%\begin{proof}
$(i)$ From Theorem \ref{main theorem} $(i)$ we have %$R_{C,\ell}=T(\ell)$.  Now
$$R_{C,\ell}=\sum_{i>\ell}\lambda_i\lesssim\sum_{i>\ell} e^{-\tau i}\lesssim\int_\ell^\infty e^{-\tau x}dx\lesssim e^{-\tau\ell}=n^{-\theta}$$
and 
%similarly,
$$R_{C,\ell}=\sum_{i>\ell}\lambda_i\gtrsim\sum_{i>\ell} e^{-\tau i}\gtrsim\int_{\ell+1}^\infty e^{-\tau x}dx\gtrsim e^{-\tau(\ell+1)}=e^{-\tau}n^{-\theta}.$$
\\
$(ii)$ Theorem \ref{main theorem} $(ii)$ and Proposition \ref{N(t) bnd exp} yield
$$R_{C_n,\ell}\lesssim_{\Pb^n}\left(n^{-\theta}+t\right)\log\frac{1}{t}\le\begin{cases}n^{-\theta}\log\frac{1}{t},\qquad\hspace{1mm} t\le n^{-\theta}\\
t\log\frac{1}{t},\qquad\quad\hspace{2mm} t\ge n^{-\theta}
\end{cases},$$
where $\frac{\log n}{n}\lesssim t\le\lambda_1$. For the case of $t\le n^{-\theta}$, we obtain
%, where the constants depend on $\kappa$ and $\delta$ but not $n,m,\ell$.  We also have $\Cal{N}_\infty(t)=\sup_{x\in\X}\inner{k(\cdot,x)}{(C+tI)^{-1}k(\cdot,x)}_\Hk\lesssim\frac{1}{t}=n^\beta$.
$$R_{C_n,\ell}\lesssim \inf\left\{n^{-\theta}\log\frac{1}{t}: \frac{\log n}{n}\lesssim t\le n^{-\theta}\right\}=n^{-\theta}\log n,$$ where the constraint is only valid for $\theta<1$. On the other hand, for $t\ge n^{-\theta}$, we obtain
$$R_{C_n,\ell}\lesssim \inf\left\{t\log\frac{1}{t}: t\gtrsim\frac{\log n}{n}\vee  n^{-\theta}\right\}=\frac{\log n}{n}\log\left(\frac{n}{\log n}\right)\le \frac{(\log n)^2}{n},$$
which holds for $\theta\ge 1$.
%The constraint $\frac{\log n}{n}\lesssim s\le\lambda_1$ of Theorem \ref{main theorem} $(ii)$ is equivalent to $0<\eta<1$.  Thus, combining Theorem \ref{main theorem} $(ii)$ and Proposition \ref{N(t) bnd exp} we have
%$$R_{C_n,\ell}\lesssim_{\Pb^n}\log(n)\left(n^{-\theta}+n^{-\eta}\right).$$
%The result follows.
\\
$(iii)$ Arguing similarly as in $(ii)$, it follows that for $\theta<1$ and $m\gtrsim n^{\theta}\log n$, we obtain a rate of $n^{-\theta}\log n$ for $R^{nys}_{C_n,\ell}$. Similarly for $\theta\ge 1$ and $m\ge \frac{n}{\log n}\log\left(\frac{n}{\log n}\right)$, we obtain a rate of $n^{-1}(\log n)^2$. 
\\
$(iv)$ Arguing as in $(ii)$ and enforcing the restriction $m\gtrsim\log n\log\frac{1}{t}$ imposed by Theorem \ref{main theorem} $(ii)$ yields the result.
%We will have the same constraint on $\beta$, that is $0<\beta<\min(\gamma,1)$, as in Corollary \ref{poly decay corollary} (iii).  Combining Theorem \ref{main theorem} $(iii)$ and Proposition \ref{N(t) bnd exp} we have,  
%$$R_{C_n,\ell}^{nys}\lesssim_{\Pb^n}\log(n)(n^-\theta+n^-\beta).$$
%The result follows.
%\end{proof}
\section{Conclusions}
In this paper, we considered the problem of deriving an approximation to kernel PCA using Nystr\"om method. This latter approach seemingly overcomes some of the difficulties of 
other approaches based on random features. In particular, it allows to derive error estimates directly comparable to those typically considered to analyze the statistical properties of KPCA. 
Our results indicate the existence of regimes where computational gains can be achieved while preserving statistical accuracy. These results parallel recent findings in supervised learning and are among the first of this kind for unsupervised learning. 

Our study opens a number of possible questions. For example, still for KPCA, it would be interesting to understand the properties of Nystr\"om sampling in combination with iterative 
eigensolvers, both batch (e.g., the power method) and stochastic (e.g., Oja's rule). The application of our approach to other spectral methods, such as those used in graph and manifold learning,  would be interesting. Beyond PCA and spectral methods, our study naturally yields the question of which other learning problems can have analogous statistical and computational trade-offs. For example, it would be interesting to consider  applications of our approach to independence tests based on covariance and cross-covariance operators \citep{Gretton-08}, or mean embeddings \citep{Sriperumbudur-10a}.

%\textcolor{red}{Need to change bib style}
% references 
\bibliographystyle{apalike}
\bibliography{Nys_NeurIPS_Bib}
%\newpage
\appendix
%\appendix
\section{Technical Results}

\begin{proposition}\label{N(t) bnd}
Suppose $\underbar{A}i^{-\alpha}\le\lambda_i\le\bar{A}i^{-\alpha}$ for $\alpha>1$ and $\underbar{A},\bar{A}\in(0,\infty)$.  The following holds:
$$\Cal{N}_C(t)\lesssim t^{-1/\alpha}.$$
\end{proposition}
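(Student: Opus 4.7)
The plan is to exploit the representation $\mathcal{N}_C(t) = \operatorname{tr}((C+tI)^{-1}C) = \sum_{i\in I} \frac{\lambda_i}{\lambda_i + t}$, which follows directly from the spectral decomposition \eqref{C spectral}, and then bound the sum by splitting the index set at a threshold calibrated to $t$.

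First, I would use the elementary bound $\frac{\lambda_i}{\lambda_i+t} \le \min\{1,\lambda_i/t\}$ for every summand. Choose the threshold $N = \lceil (\bar{A}/t)^{1/\alpha}\rceil$, which is essentially the index at which the upper envelope $\bar{A} i^{-\alpha}$ crosses $t$. For $i \le N$, I would simply bound the summand by $1$, contributing at most $N \lesssim t^{-1/\alpha}$. For $i > N$, I would use the upper-tail bound
\[
\sum_{i>N}\frac{\lambda_i}{\lambda_i+t} \le \frac{1}{t}\sum_{i>N}\lambda_i \le \frac{\bar{A}}{t}\sum_{i>N} i^{-\alpha} \le \frac{\bar{A}}{t}\int_{N}^{\infty} x^{-\alpha}\,dx = \frac{\bar{A}}{t(\alpha-1)} N^{1-\alpha},
\]
and observe that $N^{1-\alpha} \lesssim t^{(\alpha-1)/\alpha}$, so the tail contribution is also of order $t^{-1/\alpha}$. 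Summing both pieces yields the claim with a constant depending only on $\bar{A}$ and $\alpha$.

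There is essentially no obstacle here beyond bookkeeping: the proof uses only the upper bound $\lambda_i \le \bar{A} i^{-\alpha}$ (the lower bound $\underline{A}i^{-\alpha} \le \lambda_i$ is not needed for this direction), the assumption $\alpha > 1$ to guarantee convergence of the tail integral, and the monotone comparison of a decreasing sum with its integral. The only minor point of care is making the implicit constants uniform in $t$ over the range of interest (say $t \le \lambda_1$), which is automatic since the split index $N$ is at least $1$ in that range.
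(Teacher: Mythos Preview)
Your proof is correct and takes a genuinely different route from the paper's. The paper bounds each summand by $\frac{\bar A\,i^{-\alpha}}{\underline A\,i^{-\alpha}+t}$, compares the resulting sum to the integral $\int_0^\infty\frac{dx}{1+t\underline A^{-1}x^{\alpha}}$, and extracts the factor $t^{-1/\alpha}$ via the substitution $u=(t/\underline A)^{1/\alpha}x$; it then argues separately for $\alpha\ge 2$ and $1<\alpha<2$ that $\int_0^\infty\frac{du}{1+u^{\alpha}}<\infty$. Your threshold-splitting argument is more elementary: it avoids both the substitution and the case distinction on $\alpha$, and, as you observe, it uses only the upper bound $\lambda_i\le \bar A\,i^{-\alpha}$, whereas the paper's bound also invokes the lower bound $\lambda_i\ge\underline A\,i^{-\alpha}$ through the denominator. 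The paper's route yields a slightly more explicit constant, essentially $\tfrac{\bar A}{\underline A}\,\underline A^{1/\alpha}\int_0^\infty\frac{du}{1+u^{\alpha}}$, while yours gives a constant of order $\bar A^{1/\alpha}\bigl(1+\tfrac{1}{\alpha-1}\bigr)$; for the stated $\lesssim$ conclusion either is perfectly adequate.
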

\begin{proof}
We have
$$\Cal{N}_C(t)=\text{tr}\left((C+tI)^{-1}C\right)=\sum_{i\ge 1}\frac{\lambda_i}{\lambda_i+t}\le\sum_{i\ge 1}\frac{\bar{A}i^{-\alpha}}{\underbar{A}i^{-\alpha}+t}=\frac{\bar{A}}{\underbar{A}}\sum_{i\ge 1}\frac{i^{-\alpha}}{i^{-\alpha}+t\underbar{A}^{-1}}.$$
Let $u=t^{1/\alpha}\underbar{A}^{-1/\alpha}x\implies u^\alpha=t\underbar{A}^{-1}x^\alpha$ and $dx=t^{-1/\alpha}\underbar{A}^{1/\alpha}du$. Therefore,
$$\sum_{i\ge 1}\frac{i^{-\alpha}}{i^{-\alpha}+t\underbar{A}^{-1}}\le\int_0^\infty\frac{x^{-\alpha}}{x^{-\alpha}+t\underbar{A}^{-1}}\,dx=\int_0^\infty\frac{1}{1+t\underbar{A}^{-1}x^\alpha}dx=\left(\frac{\underbar{A}}{t}\right)^{1/\alpha}\int_0^\infty\frac{1}{1+u^\alpha}du.$$
Since $\frac{1}{1+u^\alpha}$ is decreasing in $\alpha$ on $u\in(0,\infty)$, we have
$$\frac{1}{1+u^\alpha}\le\frac{1}{1+u^2},\hspace{3mm}\text{if}\hspace{3mm}\alpha\ge 2.$$ 
So for $2\le\alpha$,
$$\left(\frac{\underbar{A}}{t}\right)^{1/\alpha}\int_0^\infty\frac{1}{1+u^\alpha}du\stackrel{<}{\sim}t^{-1/\alpha}\int_0^\infty\frac{1}{1+u^2}du=t^{-1/\alpha}\,\left[\tan^{-1}(u)|_0^\infty\right]=\frac{\pi}{2}t^{-1/\alpha},$$
implying 
$\Cal{N}_C(t)\lesssim t^{-1/\alpha}$. For $1<\alpha<2$, we obtain
$$t^{-1/\alpha}\int_0^\infty\frac{1}{1+u^\alpha}du\le t^{-1/\alpha}\sum_{k=0}^\infty\frac{1}{1+k^\alpha}\le t^{-1/\alpha}\left(1+\sum_{k=1}^\infty\frac{1}{k^\alpha}\right).$$
Since $1+\sum_{k=1}^\infty\frac{1}{k^\alpha}$ converges for $\alpha>1$, we obtain
$\Cal{N}_C(t)\lesssim t^{-1/\alpha}.$
\end{proof}
\begin{proposition}\label{N(t) bnd exp}
Suppose $\underbar{B}e^{-\tau i}\le\lambda_i\le\bar{B}e^{-\tau i}$ for $\tau>0$ and $\underbar{B},\bar{B}\in(0,\infty)$.  Let $\ell=\frac{1}{\tau}\log n^\theta$, $\theta>0$. Then
$$\Cal{N}_C(t)\lesssim\log\left(\frac{1}{t}\right).$$
\end{proposition}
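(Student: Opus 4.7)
The plan is to bound $\Cal{N}_C(t)=\sum_i \frac{\lambda_i}{\lambda_i+t}$ directly by splitting the sum at the crossover index $i^\star$ where $\lambda_{i^\star}\approx t$. Under the exponential decay hypothesis, this crossover occurs at $i^\star\approx \frac{1}{\tau}\log(1/t)$, and the two regimes $i\le i^\star$ (where $\lambda_i$ dominates $t$) and $i>i^\star$ (where $t$ dominates $\lambda_i$) behave very differently.

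First I would use the upper bound $\lambda_i\le\bar B e^{-\tau i}$ in the numerator and the lower bound $\lambda_i\ge\underbar B e^{-\tau i}$ in the denominator to obtain
\[
\Cal{N}_C(t)=\sum_{i\ge 1}\frac{\lambda_i}{\lambda_i+t}\le\frac{\bar B}{\underbar B}\sum_{i\ge 1}\frac{e^{-\tau i}}{e^{-\tau i}+t\underbar B^{-1}}.
\]
Define $i^\star:=\lceil\tau^{-1}\log(\underbar B/t)\rceil$, so $e^{-\tau i}\ge t\underbar B^{-1}$ iff $i\le i^\star$. For $i\le i^\star$ each summand is trivially bounded by $1$, contributing at most $i^\star\lesssim \log(1/t)$ (for $t$ small enough, which is the only interesting regime since $\Cal{N}_C(t)$ is bounded otherwise).

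For $i>i^\star$ I would drop $e^{-\tau i}$ from the denominator and bound each summand by $\underbar B^{-1}\bar B\, e^{-\tau i}/t$; the resulting geometric tail satisfies
\[
\sum_{i>i^\star}e^{-\tau i}=\frac{e^{-\tau(i^\star+1)}}{1-e^{-\tau}}\lesssim e^{-\tau i^\star}\lesssim t/\underbar B,
\]
so the tail contribution is $O(1)$. Combining the two pieces yields $\Cal{N}_C(t)\lesssim \log(1/t)+O(1)\lesssim\log(1/t)$.

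There is no real obstacle here; the only mild subtlety is handling the $t$ near $\lambda_1$ edge of the range, where $\log(1/t)$ is bounded anyway, and absorbing the $O(1)$ tail into the logarithmic main term under the $\lesssim$ convention. Note that the hypothesis $\ell=\tau^{-1}\log n^\theta$ in the statement is not actually used in the bound on $\Cal{N}_C(t)$; it is extraneous and the argument proceeds purely from the two-sided exponential decay of the eigenvalues.
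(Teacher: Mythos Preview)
Your proof is correct and reaches the same conclusion, but the route differs from the paper's. Both arguments start identically, bounding $\Cal{N}_C(t)\le\frac{\bar B}{\underbar B}\sum_{i\ge1}\frac{1}{1+t\underbar B^{-1}e^{\tau i}}$ via the two-sided eigenvalue hypothesis. From there the paper passes to the integral $\int_0^\infty\frac{dx}{1+t\underbar B^{-1}e^{\tau x}}$ and evaluates it in closed form using the antiderivative $x-\tau^{-1}\log(t\underbar B^{-1}e^{\tau x}+1)$, which after simplification yields a quantity of order $\tau^{-1}\log(1/t)$. Your approach instead splits the sum at the crossover index $i^\star\approx\tau^{-1}\log(\underbar B/t)$, bounds the head by $i^\star$ terms of size at most $1$, and controls the geometric tail by $O(1)$ after dividing through by $t$. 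Your argument is slightly more elementary (no integral calculus, just a geometric series), while the paper's integral evaluation gives a cleaner closed-form expression with explicit constants. You are also right that the hypothesis $\ell=\tau^{-1}\log n^\theta$ is extraneous to this proposition; the paper's proof does not use it either.
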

\begin{proof}
We have
$$\Cal{N}_C(t)=\text{tr}\left((C+tI)^{-1}C\right)=\sum_{i\ge 1}\frac{\lambda_i}{\lambda_i+t}\le\frac{\bar{B}e^{-\tau i}}{\underbar{B}e^{-\tau i}+t}=\frac{\bar{B}}{\underbar{B}}\sum_{i\ge 1}\frac{1}{1+t\underbar{B}^{-1}e^{\tau i}}$$
$$\lesssim\int_0^\infty\frac{1}{1+t\underbar{B}^{-1}e^{\tau x}}dx=\left[x-\frac{1}{\tau}\log\left(t\underbar{B}^{-1}\,e^{\tau x}+1\right)\right]\Big|_0^\infty.$$
Since
$$x-\frac{1}{\tau}\log\left(t\underbar{B}^{-1}\,e^{\tau x}+1\right)=\frac{1}{\tau}\left(\log(e^{\tau x})-\log\left(t\underbar{B}^{-1}\,e^{\tau x}+1\right)\right)=\frac{1}{\tau}\log\left(t^{-1}\underbar{B}\frac{e^{\tau x}}{\,e^{\tau x}+t^{-1}\underbar{B}}\right),$$
evaluating
$$\frac{1}{\tau}\log\left(t^{-1}\underbar{B}\frac{e^{\tau x}}{\,e^{\tau x}+t^{-1}\underbar{B}}\right)\Big|_0^\infty$$
yields the result.
\end{proof}
%\section{Additional Experiments}\label{Sec:digit2}
% \begin{figure}[!t]
%   \centering\vspace{-20mm}
%   \begin{minipage}[b]{0.8\textwidth}
%     \includegraphics[width=\textwidth]{nys2_plot41.png}
%     %\caption{Flower one.}
%   \end{minipage}
%   %\hfill
%   %\begin{minipage}[b]{0.49\textwidth}
%   %  \includegraphics[width=\textwidth]{nys2_plot32.png}
%   % \caption{Flower two.}
%   %\end{minipage}
%   %\hfill
%   %\begin{minipage}[b]{0.49\textwidth}
%   %  \includegraphics[width=\textwidth]{nys2_plot33.png}
%   % \caption{Flower two.}
%   %\end{minipage}
%   \vspace{-18mm}
%   \caption{Empirical reconstruction error of EKPCA and average empirical reconstruction error of 100 repetitions of NY-KPCA on digit 5 versus number of principal components $\ell$; error bars represent $\pm 2$ standard deviations. Runtime in seconds is given in parentheses next to the number of Nystr\"om subsamples $m$. Runtime for EKPCA is 478 seconds.}
%   \vspace{-4mm}
%   \label{Fig:2}
% \end{figure}
% In this section, we present additional results demonstrating the empirical performance of NY-KPCA compared to EKPCA using the setup described in Section~\ref{Sec:Expts}. Figure~\ref{Fig:2} shows that NY-KPCA has similar performance to that of EKPCA in terms of the empirical reconstruction error but with significant computational savings.
\end{document}